\newcommand{\setmid}{{\ \,\mid\ \,}}
\newcommand{\bv}{b_{V}}
\newcommand{\GnD}{\mathbb{G}_{n,D}}
\newcommand{\RR}{\mathbb{R}}
\newcommand{\NN}{\mathbb{N}}
\newcommand{\lms}{ \{\!\!\{ }
\newcommand{\rms}{ \}\!\!\} }
\newcommand{\m}{ \mid }
\newcommand{\lb}{ \!\left( }
\newcommand{\rb}{ \!\right) }
\newcommand*{\rom}[1]{\expandafter\@slowromancap\romannumeral #1@}
\newcommand{\M}{\mathbb{M}}
\newcommand{\W}{\mathbb{W}}
\newcommand{\G}{\mathcal{G}}
\newcommand{\N}{\mathcal{N}}
\newcommand{\affdim}{\mathrm{AffineDim}}
\newcommand{\zv}{\Vec{0}_{V}}
\newcommand{\bc}{\mathbf{c}}
\newcommand{\bi}{\mathbf{i}}
\newcommand{\bC}{\mathbf{C}}
\newcommand{\Cglobal}{\bC_{\mathrm{global}}}
\newcommand{\cglobal}{\bc_{\mathrm{global}}}
\newcommand{\WeLNet}{\textbf{WeLNet} }
\DeclareSymbolFont{boldoperators}{OT1}{cmr}{bx}{n}
\edef\bar{\unexpanded{\protect\mathaccentV{bar}}\number\symboldoperators16}
\newcommand{\bx}{\boldsymbol{x}}
\newcommand{\by}{\boldsymbol{y}}
\newcommand{\btheta}{\boldsymbol{\theta}}
\newcommand{\bth}{\btheta}
\newcommand{\of}[1]{{\left({#1}\right)}}
\newcommand{\br}[1]{{\left({#1}\right)}}
\newcommand{\Rthree}{\RR^{3 \times n}}
\newcommand{\embed}{\text{\textbf{HASH}}}
\newcommand{\bphi}{\boldsymbol{\phi}}
\newcommand{\X}{\mathcal{X}}
\newcommand{\Y}{\mathcal{Y}}
\renewcommand{\O}{\mathcal{O}}
\newcommand{\h}{\mathbf{h}}
\newcommand{\PPGN}{\mathrm{PPGN}}
\newcommand{\ctokl}{\mathbf{C_{(1)}}(k,l)}
\newcommand{\czplain}{\mathbf{C_{(0)}}}
\newcommand{\ctone}{\mathbf{C_{(1)}}(i,j)}
\newcommand{\ctoneni}{\mathbf{C_{(1)}}}
\newcommand{\cttwo}{\mathbf{C_{(2)}}(i,j)}
\newcommand{\cttwoni}{\mathbf{C_{(2)}}}
\newcommand{\ctthreeni}{\mathbf{C_{(3)}}}
\newcommand{\ctfour}{\mathbf{C_{(4)}}(i,j)}
\newcommand{\ctfive}{\mathbf{C_{(5)}}(i,j)}
\newcommand{\cone}{\mathrm{Cone}}
\newcommand{\PPGNan}{\PPGN_{\mathrm{an}}(\theta;\Delta,T)}
\newcommand{\Xout}{X^{\mathrm{out}}}
\newcommand{\Vout}{V^{\mathrm{out}}}
\newcommand{\welconv}{\mathbf{WeLConv}}
\theoremstyle{plain}
\newtheorem{theorem}{Theorem}[section]
\newtheorem{proposition}[theorem]{Proposition}
\newtheorem{lemma}[theorem]{Lemma}
\theoremstyle{definition}
\newtheorem{definition}[theorem]{Definition}
\theoremstyle{remark}
\icmltitlerunning{Weisfeiler Leman for Euclidean Equivariant Machine Learning}
\begin{document}

\twocolumn[
\icmltitle{Weisfeiler Leman for Euclidean Equivariant Machine Learning}




\begin{icmlauthorlist}
\icmlauthor{Snir Hordan}{math}
\icmlauthor{Tal Amir}{math}
\icmlauthor{Nadav Dym}{math,cs}

\end{icmlauthorlist}

\icmlaffiliation{cs}{Faculty of Computer Science, Technion - Israel Institute of Technology, Haifa, Israel}
\icmlaffiliation{math}{Faculty of Mathematics, Technion - Israel Institute of Technology, Haifa, Israel}
\icmlcorrespondingauthor{Snir Hordan}{snirhordan@campus.technion.ac.il}


\icmlkeywords{Machine Learning, ICML}

\vskip 0.3in
]



\printAffiliationsAndNotice{}  

\begin{abstract}

 The $k$-Weisfeiler-Leman ($k$-WL) graph isomorphism test hierarchy is a common method for assessing the expressive power of graph neural networks (GNNs). Recently, GNNs whose expressive power is equivalent to the $2$-WL test were proven to be universal on weighted graphs which encode $3\mathrm{D}$ point cloud data, yet this result is limited to \textit{invariant} continuous functions on point clouds. In this paper, we extend this result in three ways: Firstly, we show that PPGN \cite{NEURIPS2019_provably} can simulate $2$-WL \emph{uniformly} on all point clouds with low complexity. Secondly, we show that $2$-WL tests can be extended to point clouds which include both positions and velocities, a scenario often encountered in applications. Finally, we provide a general framework for proving equivariant universality and leverage it to prove that a simple modification of this \emph{invariant} PPGN architecture can be used to obtain a universal \emph{equivariant} architecture that can approximate all continuous equivariant functions uniformly. Building on our results, we develop our \WeLNet architecture, which sets new state-of-the-art results on the N-Body dynamics task and the GEOM-QM9 molecular conformation generation task.

\end{abstract}
\section{Introduction}


Machine learning (ML) models that are equivariant to group symmetries of data have been at the focal point of recent research. Examples of equivariant models range from Convolutional Neural Networks (CNNs) that respect the translation symmetry of images, through graph neural networks (GNNs) that enforce permutation invariance to account for the invariance of the order of a node's neighbors, to models that respect symmetries of the Lornentz \cite{Lorenz} or Special Linear group \cite{lawrence2023learning}. Equivariant models are well-known to be empirically robust \cite{cohen2016group} and lead to improved generalization \cite{mircea-equi}.


In this paper, our focus will be on ML models for point clouds. A point cloud is a finite collection of points, usually in $\RR^{3}$, with the natural symmetry of invariance to permutation. Point clouds are flexible objects which are used to represent discretized surfaces, molecules, and particles \cite{discrete-cloud}. 

In many of these settings, point clouds have an additional natural symmetry to the actions of rotations, reflections, and translation, which generate the Euclidean group, which is the group of isometries of Euclidean space. Due to their applications in chemo-informatics \cite{pozdnyakov2023smooth}, particle dynamics \cite{schutt2017schnet}, and computer vision \cite{qi2017pointnet}, point cloud networks that respect Euclidean symmetries have attracted considerable attention in recent years \cite{thomas2018tensor,egnn,deng2021vector,clofnet}. 

An emerging paradigm for constructing equivariant networks for point clouds goes through the observation that an ordered set of points in Euclidean space is determined, up to Euclidean symmetry, by the set's pairwise distance matrix. Each such matrix can be identified with a complete weighted graph. Using this identification, point-cloud neural networks can be constructed by applying standard GNNs to a point cloud's distance matrix \cite{lim2022sign}, as GNNs enforce permutation invariance. 

To address the theoretical potential and limitations of these graph-based equivariant models, a recent line of research \cite{pozdnyakov2022incompleteness, hordan2023complete, rose2023iterations} seeks to assess their expressive power via $k$-WL tests \cite{weisfeiler1968reduction}, a hierarchy of graph isomorphism tests with strictly increasing distinguishing power as one goes up the hierarchy \cite{CaiFuererImmerman1992}. This hierarchy has shown to be useful in assessing the expressive power of GNNs on combinatorial graphs \cite{morris2020weisfeiler, xu2018powerful}.

For GNNs applied to point clouds, \cite{pozdnyakov2022incompleteness} showed that there exist pairs of non-isometric point clouds that cannot be distinguished by a GNN whose expressive power is bounded by $1$-WL. This suggests that the capacity of such GNNs, which include the popular Message Passing Neural Networks \cite{xu2018powerful}, is limited, and more expressive GNNs may be needed for some geometric tasks.



The geometric incompleteness of $1$-WL was proven to be remedied when climbing the $k$-WL hierarchy ladder by only a single step \cite{hordan2023complete, rose2023iterations}. That is, the 2-WL graph isomorphism test for point clouds is \emph{complete}:  it can distinguish between all non-isometric 3D point clouds. As a corollary, it can be shown that GNNs that can simulate the $2$-WL test, implemented with suitable aggregation operators, can approximate all continuous functions on point clouds invariant to Euclidean actions \cite{hordan2023complete,li2023distance}. 

These theoretical findings are coupled with strong empirical results attained by \citeauthor{li2023distance} with $2$-WL-based methods for Euclidean invariant geometric tasks. It may be argued that these recent results indicate that $2$-WL-based methods are well suited for learning on point clouds, particularly for molecular datasets and other datasets with a high degree of data symmetry \cite{pozdnyakov2020incompleteness, li2023distance}, where $1$-WL may falter. The low data dimensionality of some molecular datasets makes $ 2$-WL-based methods computationally feasible,
while overcoming the inherent limitations of models bounded in their expressive power by $1$-WL.

Nonetheless, 2-WL-based methods are less studied than other research directions in the equivariant point cloud literature, and several practical and theoretical challenges remain. In this paper, we address three such important challenges:


Firstly, the universality results stated above assume an implementation of GNNs that can simulate the $2$-WL test. While the PPGN \cite{NEURIPS2019_provably} architecture is indeed known to simulate the $2$-WL test, it is only guaranteed to separate different graph pairs using different network parameters. To date, it is not clear that $2$-WL can be simulated by a network with fixed parameters uniformly on all point clouds of size $n$. Moreover, even for pairwise separation, the time complexity in the proof in \cite{NEURIPS2019_provably} is prohibitively high: the time complexity of a PPGN block is estimated at $\mathcal{O}{(N_F\cdot n^{\omega})}$, where $n^\omega$ is the complexity of matrix multiplication and $N_F$ denotes the dimension of the edge features. In the proof in \cite{NEURIPS2019_provably}, $N_F$ grows exponentially both in the number of iterations and in the number of input features.

Secondly, the input to point-cloud tasks that originate from physical simulation is often not one, but two points clouds: one defining particle positions, the other defining particle velocity. It is a desideratum to construct an architecture that is complete with respect to such data, that is, it can distinguish among all position-velocity pairs up to symmetries.

Lastly, existing universality results for $2$-WL-based methods for point clouds are restricted to permutation- and rotation-invariant functions, or to functions that are permutation-invariant and rotation-equivariant. The case of functions that are jointly permutation- and rotation-equivariant is more difficult to characterize and has not been addressed to date. 
\subsection{Contributions}\label{sub:contributions} The three contributions of this manuscript address these three challenges:

\textbf{Contribution 1: Cardinality of $2$-WL simulation}
We show that PPGN with a fixed finite number of parameters can \emph{uniformly separate} continuous families of $2$-WL separable graphs, and in particular all $3\mathrm{D}$ point clouds. The number of parameters depends moderately on the intrinsic dimension of these continuous graph families. Consequently, the memory and runtime complexity reduces to $\mathcal{O}(n^2)$ and $\O(n^{\omega})$, respectively. This result particularly applies to weighted graphs derived from point clouds, but it is also of \textbf{independent interest} for general  graphs processed by the \textbf{most studied general $k$-WL based GNNs }\cite{NEURIPS2019_provably, morris2018weisfeiler}, as we prove separation for a general family of weighted graphs.

\textbf{Contribution 2: Combining positions and velocities} We suggest an adaptation of the $2$-WL test to the case of position-velocity pairs, and show this test is complete. These results can also be easily extended to cases where additional geometric node features such as forces, or non-geometric features such as atomic numbers, are present.

\textbf{Contribution 3: Equivariant Universality}
We propose a simple method to obtain an \emph{equivariant} architecture from the \emph{invariant} $2$-WL based PPGN architecture and show that this architecture is equivariant universal. That is, it can approximate all continuous \textit{equivariant} functions, uniformly on compact sets. 

Building on these results, we introduce our \textbf{We}isfeiler-\textbf{L}eman \textbf{Net}work architecture, \WeLNet, which can process position-velocity pairs, produce functions fully equivariant to permutations, rotations, and translation, and is provably complete and universal. 

A unique property of \textbf{WeLNet} is that for Lebesgue almost every choice of its parameters,  it \textbf{is provably complete} precisely in the settings in which it is implemented \textbf{in practice}. This is in contrast with previous complete constructions, which typically require an unrealistically large number of parameters to be provably complete, such as ClofNet \cite{clofnet}, GemNet \cite{gemnet} and TFN \cite{thomas2018tensor, Dym2020OnTU}.


Our experiments show that \WeLNet compares favorably with state of the art architectures for the $\mathrm{N}$-body physical simulation tasks and molecular conformation generation. Additionally, we empirically validate our theory by showing that PPGN can separate challenging pairs of $2$-WL separable graphs with a very small number of features per edge. This effect is especially pronounced for analytic non-polynomial activations, where a single feature per edge is provably sufficient. 
\section{Related Work}\label{app:related}

\paragraph{Equivariant Universality}
Invariant architectures that compute features that completely determine point clouds with rotation and permutation symmetries were discussed extensively in the machine learning \cite{bokman2022zz,clofnet,kurlin, DBLP:conf/cvpr/WiddowsonK23, Dkulrin_molecults} and computational chemistry \cite{doi:10.1137/15M1054183,ACE,nigam2023completeness} literature. Invariant universality is an immediate corollary \cite{dym2023low}. In contrast, \emph{equivariant universality} has not been fully addressed until this work.

\citet{Dym2020OnTU} show that the Tensor Field Network \cite{thomas2018tensor} has full equivariant universality, but this construction requires irreducible representations of arbitrarily high order. \citet{puny2021frame} provide simple and computationally sound equivariant constructions that are universal but have discontinuous singularities and assume the incomplete case of a generic point cloud (a point cloud that has a covariance matrix with distinct eigenvalues). \citet{hordan2023complete} characterize functions that are permutation invariant and rotation equivariant, but do not address the fully-equivaraint case. 
  Finally, \citet{villar} provides an implicit characterization of permutation- and rotation-equivariant functions, via permutation-invariant and rotation-equivariant functions. However, these results \emph{per se} are not enough to construct an explicit characterization of such functions, or to universally approximate them. In fact, as noted in \cite{pozdnyakov2022incompleteness}, the practical implementation of these functions proposed in \cite{villar} is not universal. Ultimately, we maintain that the problem of joined permutation- and rotation-equivariance has not been fully addressed up to this work, as well as the problem of joined position-velocity universality.

\paragraph{Simulating WL} 
The seminal works of \citeauthor{xu2018powerful,morris2018weisfeiler} have shown that sufficiently expressive message-passing neural networks (MPNNs) are equivalent to the $1$-WL test, but do not give a reasonable bound on the network size necessary to uniformly separate a large finite or infinite collection of graphs. However,  recent work has shown that $1$-WL can be simulated by ReLU MPNNs with polylogarithmic complexity in the size of the  combinatorial graphs \cite{aamand2022exponentially}, while MPNNs with analytic non-polynomial activations can achieve $1$-WL separation with low complexity independent of the graph size \cite{amir2023neural}, as long as the feature space is discrete. This expressivity gap between analytic and piecewise linear, and even piecewise-polynomial, activations, is also discussed in \citet{khalife2023power}.

GNNs simulating higher-order $k$-WL tests have been proposed in \cite{NEURIPS2019_provably,morris2018weisfeiler}, but these works have also focused on pairwise separation. \citet{jogl2024expressivity} have expanded upon the results by \cite{morris2018weisfeiler} to include simulation by MPNNs of recent GNNs via a notion of a transformed graph. Yet, a simulation by an MPNN via a transformed graph, of a test with equivalent distinguishing power to $2$-WL, would be of computational order of $O(n^4)$ (vs. our $O(n^{\omega}),  \omega \leq 3$) with the best-known results for uniformly simulating MPNNs \cite{amir2023neural}, see \cref{app:extensions}. \cite{hordan2023complete} discusses uniform separation with computational complexity only modestly higher than what we use here, $O(n^3\log(n))$ vs. $O(n^{\omega})$, with $\omega \leq 3$, but the GNN discussed there used sorting-based aggregations, which are not commonly used in practice. To the best of our knowledge, this is the first work in which popular high-order GNNs are shown to uniformly separate graphs with low feature cardinality and practical time complexity.

\section{Problem Setup}
\subsection{Mathematical Notation}\label{sub:notation}
A (finite) \emph{multiset} $\lms y_1,\ldots,y_n \rms $ is an unordered collection of elements where repetitions are allowed.

Let $\G$ be a group acting on a set $\X$. For $X,Y \in \X$,  we say that $X\cong Y$ if $Y=gX$ for some $g \in \G$. We say that a function $f:\X \to \Y$  is \emph{invariant} if $f(gx)=f(x)$ for all $x\in X, g\in G$. We say that $f$ is \emph{equivariant} if $\Y$ is also endowed with some action of $G$ and  $f(gx)=gf(x) $ for all $x\in \X,g\in \G$. 

In this paper, we consider the set $\X=\RR^{6\times n}$ and regard it as a set of pairs $(X,V)$, with $X,V \in \RR^{3\times n}$ denoting the positions and velocities of $n$ particles in $\RR^3$. We denote the $n$ columns of $X$ and $V$ by $x_i$ and $v_i$ respectively, $i=1,\ldots,n$. The natural symmetries of $(X,V)$ are permutations, rotations, and translations. Formally, we say that $(X,V)\cong (X',V')$ if there exist a permutation $\tau$, a (proper or improper) rotation $R\in O(3)$, and a `translation' vector $t$, such that 
\begin{align*}
(x_1',\ldots,x_n')&=(Rx_{\tau(1)}+t,\ldots,Rx_{\tau(n)}+t)\\
(v_1',\ldots,v_n')&=(Rv_{\tau(1)},\ldots,Rv_{\tau(n)})
\end{align*}

We consider functions $f:\RR^{6 \times n}\to \RR^{6\times n}$ (and scalar functions $f:\RR^{6 \times n}\to \RR$) that are equivariant (respectively invariant) to permutations, rotations, and translations. 

Description of additional related works, and proofs of all theorem, are given in the appendix.

\subsection{2-WL Tests}\label{sec:tests}



The $2$-WL test is a test for determining whether a given pair of (combinatorial) graphs are isomorphic (that is, related by a permutation). It is defined as follows: let $\G$ be a graph with vertices indexed by $[n] = \{1,2,\ldots,n\}$, possibly endowed with node features $w_v$ and edge features $w_{u,v}$. We denote each ordered pair of vertices by a multi-index $\mathbf{i} = \left( i_1,i_2 \right) \in [n]^2$. For each such pair $\mathbf{i}$, the $2$-WL test maintains a \emph{coloring} $\mathbf{C}(\mathbf{i})$ that belongs to a discrete set, and updates it iteratively. First, the coloring of each pair $\mathbf{i}$ is assigned an initial value  $\mathbf{C_{(0)}}(\mathbf{i})$ that encodes whether there is an edge between the paired nodes $i_1$ and $i_2$, and the value of that edge's feature $w_{i_1,i_2}$, if given as input. Node features $w_i$ are encoded in this initial coloring through pairs of identical indices $\mathbf{i}=(i,i)$. 
Then the color of each pair $\mathbf{i}$ is iteratively updated according to the colors of its `neighboring' pairs. The colors of neighboring pairs is \emph{aggregated} to an `intermediate color'  $\mathbf{\tilde{C}_{(t+1)}}(\mathbf{i}) $,
\begin{equation}
\mathbf{\tilde{C}_{(t+1)}}(\mathbf{i}) = \embed( \lms  \mathbf{C_{(\mathbf{t})}}(i_1,j),C_{(\mathbf{t})}(j,i_2)  \rms_{j=1}^{n}  ) \label{eq:2wlagg}
\end{equation}
This `intermediate color' is then \emph{combined} with the previous color at $\mathbf{i}$ to form the new color
\begin{equation}
\mathbf{C_{(t+1)}}(\mathbf{i}) = \embed\left(\mathbf{{C}_{(t)}}(\mathbf{i}),\mathbf{\tilde{C}_{(t+1)}}(\mathbf{i})   \right) \label{eq:2wlcombine}
\end{equation}

 This process is repeated $T$ times to obtain a final coloring $ \lms \mathbf{C_{(T)}}(\mathbf{i}) \rms_{\mathbf{i} \in {\left[n\right]^2}}$. A global label is then calculated by
\begin{equation}\label{eq:final-label}
    \Cglobal=\embed \left( \lms \mathbf{C_{(T)}}(\mathbf{i}) \; | \; \mathbf{i} \in [n]^2 \rms \right).
\end{equation}
 To test the isomorphism of two graphs $G,G'$, this process is run simultaneously for both graphs to produce global features $\Cglobal$ and $\Cglobal'$. The $\embed$ function is defined recursively throughout this process, such that each new input encountered is labeled with a distinct feature. If $G$ and $G'$ are isomorphic, then by the permutation invariant nature of the test, $\Cglobal=\Cglobal'$. Otherwise, for `nicely behaved' non-isomorphic graphs the final global features will be distinct, but there are pathological (highly regular) graph pairs for which identical features will be obtained. Thus, the $2$-WL test is not \emph{complete} on the class of general graphs.
 


\subsection{Geometric 2-WL Tests}
We now turn to the geometric setting. Here we are given two point clouds $X,X' \in \RR^{3\times n}$ (we will discuss including velocities later), and our goal is to devise a test to check whether they are equivalent up to permutation, rotation, and translation. As observed in e.g, \cite{egnn}, this problem can be equivalently rephrased as the problem of distinguishing between two (complete) \emph{weighted graphs} $\G(X)$ and 
$\G(X')$ whose nodes correspond to the indices of the points, and whose edge weights encode the pairwise distances $\|x_i-x_j\| $ (respectively, $\|x_i'-x_j'\|$).  The two weighted graphs $\G(X)$ and $\G(X')$ are isomorphic (that is, related by a permutation) if and only if $X \cong X'$ \cite{egnn}. Accordingly, we obtain a test to check whether $X \cong X'$ by applying the $2$-WL test to the corresponding graphs $\G(X)$ and $\G(X')$ and checking whether they yield an identical output. As mentioned earlier,  \citeauthor{rose2023iterations} showed that in the 3D geometric setting, the 2-WL test is complete. That is, the $2$-WL test will assign the same global feature to $\G(X)$ and $\G(X')$ if, and only if,  the point clouds $X,X'$ are related by permutation, rotation, and translation. 

Note that, although the $2$-WL test is typically applied to pairs of `discrete' graphs, it can easily be applied to a pair of graphs with `continuous features', since ultimately for a fixed pair of graphs the number of features is finite. The main challenge in the continuous feature case is proposing practical, differentiable, graph neural networks which can simulate the $2$-WL test. The involves  replacing the $\embed$ functions with functions that are both differentiable and injective on an (uncountably) infinite and continuous feature space. These issues are addressed in our second contribution on simulating $2$-WL tests with differentiable models (Section \ref{sec:ppgnblock}).

In the three following sections, we will address the three challenges outlined in \cref{sub:contributions}. In \cref{sec:ppgnblock} we show that the PPGN architecture can simulate 2-WL tests, even with a continuum of features, with relatively low complexity. In \cref{sec:v} we discuss how to define 2-WL tests that are complete when applied to position-velocity pairs $(X,V)$. In \cref{sec:equi} we show that the PPGN architecture, combined with an appropriate pooling operation, is a universal approximator of continuous functions that are equivariant to permutations, rotations and translations.

\subsection{Extensions and Limitations}

There are many variants of the setting above which could be considered: $O(d)$ equivariance with $d\neq 3$, allowing only proper rotations in $SO(d)$ rather than all rotations, and allowing multiple equivariant features per node rather than just a position-velocity pair. In Appendix \ref{app:extensions} we outline how our approach can be extended to these scenarios.

Our universality results hold only for complete distance matrices and not for geometric graphs with a notion of a local neighborhood. Often in applications, a distance threshold is used to allow for better complexity. Thetheoretical results presented in this manuscript cannot be directly applied to this setting, though the \WeLNet architecture can be naturally adapted to these cases.

\section{Simulation of $2$-WL with Exponentially Lower Complexity}\label{sec:ppgnblock}

In this section we discuss our first contribution regarding designing neural networks which simulate the $2$-WL test. 

Models that simulate the $2$-WL test  replace the  $\embed$ functions the combinatorial test uses, which are defined anew for every graph pair, with differentiable functions which are globally defined on all graphs. 

The three main models proposed in the literature for simulating $2$-WL (equivalently, the $3$-OWL test, see \cite{morris2021weisfeiler}) are the $3$-GNN model from \cite{morris2018weisfeiler}, and the $3$-EGN and PPGN models from \cite{NEURIPS2019_provably}. Following \citeauthor{li2023distance}, we focus on PPGN in our analysis and implementation, as this model is more efficient than the rest due to an elegant usage  of matrix multiplications.

We start by describing the PPGN model. Then we explain in what sense existing results have shown that it simulates the $2$-WL test, and explain the shortcomings of these previous results. We then provide new, significantly stronger separation results. We note that this section is relevant for any choice of continuous or discrete labeling used to initialize $2$-WL, and thus should be of interest to graph neural network research also beyond the scope of its applications to Euclidean point clouds.
\subsection{PPGN architecture}\label{sub:ppgn}
The input to the PPGN architecture is the same as to the $2$-WL test, that is, the same collection of pairwise features  
\begin{equation}\label{eq_PPGN_init}
   \bc_{(0)}(\bi)=\bC_{(0)}(\bi) \text{ where } \bi=(i_1,i_2)\in [n]^2
\end{equation}
obtained from the input graph $\G$. We will assume that all graphs have $n$ vertices and their edge and node features are $D$ dimensional. We denote the collection of all such graphs by $\GnD$, 

Similarly to $2$-WL, PPGN  iteratively defines new pairwise features $\bc_{(t+1)}$ from the previous features  $\bc_{(t)}$ 
using an aggregation and combination step, the only difference being that now the $\embed$ functions are replaced by differentiable functions. Specifically, the aggregation function used by PPGN involves two MLPs $\phi^{(1,t)}$ and $\phi^{(2,t)}$:
\begin{equation}\label{eq:PPGNagg}
\tilde{\bc}_{(t+1)}(\bi)=\sum_{j=1}^n \phi^{(1,t)}\left(\bc_{(t)}(i_1,j) \right) \odot \phi^{(2,t)}\left(\bc_{(t)}(j,i_2) \right).
\end{equation}
Here the output of the two MLPs has the same dimension, which we denote by $D_{(t+1)}$, and $\odot$ denotes the entrywise (Hadamard) product. Note that \eqref{eq:PPGNagg} can be implemented  by independently computing $D_{(t+1)}$ matrix products.

We note that \eqref{eq:PPGNagg}, which corresponds to \eqref{eq:2wlagg} in the combinatorial case, is a well-defined function on the multiset $ \lms  (\mathbf{c_{(\mathbf{t})}}(i_1,j),\bc_{(\mathbf{t})}(j,i_2)  \rms_{j=1}^{n}$; that is, permuting the $j$ index will not affect the result.

The combination step in PPGN involves a third MLP $\phi^{(3,t)}$, whose output dimension is also $D_{(t+1)}$:
\begin{equation}\label{eq_PPGN_update_2}
\bc_{(t+1)}(\bi)=\tilde{\bc}_{(t+1)}(\bi) \odot \phi^{(3,t)}\left(\bc_{(t)}(\bi)\right).
\end{equation}
We note that in this choice of the combination step, we follow \citeauthor{li2023distance}. This product-based step is more computationally efficient than the original concatenation-based combination step of \citeauthor{NEURIPS2019_provably}. We address the simpler concatenation-based combination step in Appendix A.

After $T$ iterations, a graph-level representation $\cglobal$ is computed via a `readout' function that operates on the multiset of all $T$-level features $ \lms\bc_{(T)}(\bi)\rms_{\bi\in[n]^2}$. This is done using a final MLP, denoted $\phi_{\text{READOUT}}$, via 
 $$\cglobal=\sum_{\bi\in[n]^2} \phi_{\text{READOUT}}(\bc_{(T)}(\bi)).$$

 \paragraph{Analytic PPGN} The PPGN architecture implicitly depends on several components. In our analysis in the next subsection we will focus on a simple instantiation, where all intermediate dimensions are equal to the same number $\Delta$, that is $D_{(t)}=\Delta$ for $t=0,\ldots,T-1$, and the MLPs $\phi^{(s,t)}$, $s=1,2,3$ and $\phi_{\text{READOUT}}$ are shallow networks of the form $\rho(Ax+b)$, with $A \in \RR^{\Delta \times \Delta}$, $b \in \RR^{\Delta}$, and $\rho : \RR \to \RR$ being an analytic non-polynomial function applied element-wise. This includes common activations, such as tanh, silu, sigmoid and most other smooth activation functions, but does not include piecewise-linear activations such as ReLU and leaky ReLU (for more on this see Figure \ref{fig:sep}). Under these assumptions, a PPGN network is completely determined by the number of nodes $n$, input feature dimension $D$, hidden feature dimension $\Delta$, the number of iterations $T$, and the parameters of all the linear layers in the MLPs $\phi^{(s,t)}$ and $\phi_{\text{READOUT}}$,  which we denote by $\theta$. We call a PPGN network satisfying all these assumption an \emph{analytic PPGN network}, and denote it by $\PPGNan$. 

\subsection{PPGN separation}
In \cite{NEURIPS2019_provably} it is proven that PPGN simulates $2$-WL in the following sense: firstly, by construction, if $\G $ and $ \G'$ cannot be separated by $2$-WL, then they cannot be separated by PPGN either. Conversely, if $\G $ and $ \G'$ represent graphs that are separated by $2$-WL, then PPGN with  sufficiently large MLPs $\bphi$ will separate $\G$ and $ \G'$. 

This result has two limitations. The first is that the size of the PPGN networks in the separation proof provided in \cite{NEURIPS2019_provably} is extremely large. Their construction relies on a power-sum polynomial construction whose cardinality depends exponentially on the input and number of 2-WL iterations $T$, coupled with an approximation of the polynomials by MLP --- which leads to an even higher complexity. The second limitation is that the separation results obtained in \cite{NEURIPS2019_provably} are not uniform, but apply only to pairs of graphs. While this can be easily extended to uniform separation on all pairs of $2$-WL separable graphs coming from a finite family (see \cite{chen2019equivalence}), this is not the case for infinite families of graphs. Indeed, in our geometric setting, we would like to find a PPGN network of finite size and fixed parameters, that can separate \emph{all} weighted graphs generated by $(X,V)$ pairs. This is an infinite, $6n$-dimensional family of weighted graphs.

In this paper we resolve both of these limitations. We first show that the cardinality required for pairwise separation is actually extremely small: analytic PPGN require only one-dimensional features for pairwise separation.

\begin{restatable}{theorem}{pairsep}[2-WL pairwise separation]\label{thm:pairs}
    Let $(n,D,T)\in \NN^3$ and set $\Delta=1$. Let $\G,\G'\in \GnD$ represent two graphs separable by $T$ iterations of $2$-WL.  Then for Lebesgue almost every choice of the parameters $\theta$, the features $\cglobal$ and $\cglobal'$ obtained from applying $\PPGNan$ to $\G$ and $\G'$, repsectively, satisfy that $\cglobal\neq \cglobal'$.
\end{restatable}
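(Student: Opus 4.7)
The plan is to exploit real-analyticity to reduce the almost-everywhere claim to an existence claim. Since $\rho$ is real-analytic and all PPGN operations are sums, Hadamard products, and compositions with affine maps followed by $\rho$, the map $\theta\mapsto \cglobal(\G;\theta)-\cglobal(\G';\theta)$ is real-analytic on the parameter space $\RR^N$. A standard fact says that a real-analytic function on $\RR^N$ either vanishes identically or has a Lebesgue-null zero set, so it suffices to exhibit a single $\theta^*$ with $\cglobal(\G;\theta^*)\neq \cglobal(\G';\theta^*)$.

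To build $\theta^*$, I will inductively choose the layer parameters so that the scalar PPGN features are \emph{color-faithful} with respect to the $2$-WL coloring, meaning that for every pair of multi-indices $\bi,\bi'$ drawn from $\G$ or $\G'$, $\bc_{(t)}(\bi)=\bc_{(t)}(\bi') \iff \bC_{(t)}(\bi)=\bC_{(t)}(\bi')$. The base case $t=0$ reduces to choosing the first affine layer so that $\rho(Ax+b)$ is injective on the finite set of initial colors $\{\bC_{(0)}(\bi)\}\subset\RR^D$. A collision $\rho(A x+b)=\rho(Ax'+b)$ for fixed $x\neq x'$ is the vanishing of a nontrivial real-analytic function of $(A,b)$ (nontriviality follows since $\rho$ is non-polynomial), hence occurs only on a measure-zero set; a finite union of such sets is still measure-zero, so almost every $(A,b)$ works.

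For the inductive step, the $2$-WL color $\bC_{(t+1)}(\bi)$ is determined by $\bC_{(t)}(\bi)$ together with the multiset $\lms (\bC_{(t)}(i_1,j),\bC_{(t)}(j,i_2))\rms_{j=1}^n$. By the invariant this multiset is in bijection with $M(\bi):=\lms (\bc_{(t)}(i_1,j),\bc_{(t)}(j,i_2))\rms_{j=1}^n\subset\RR^2$, so I must choose the layer parameters so that (i) the scalar aggregation $\tilde{\bc}_{(t+1)}(\bi)=\sum_{j=1}^n \rho(a_1 x_j+b_1)\rho(a_2 y_j+b_2)$ takes distinct values on the finitely many distinct multisets $\{M(\bi)\}_\bi$, and (ii) the subsequent combination $\tilde{\bc}_{(t+1)}(\bi)\cdot \rho(a_3\bc_{(t)}(\bi)+b_3)$ further refines these according to $\bC_{(t)}(\bi)$. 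Each requirement amounts to the nonvanishing of a specific real-analytic function of the layer parameters; this is the main technical obstacle.

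To establish the key nonvanishing, I use that $\rho$ analytic and non-polynomial implies that the family $\{\rho(a_1 x+b_1)\rho(a_2 y+b_2)\}$ recovers every bivariate monomial $x^py^q$: expanding $\rho$ in its power series around $b_1,b_2$ and applying $\partial_{a_1}^p\partial_{a_2}^q$ at $a_1=a_2=0$ yields $x^py^q$ scaled by the Taylor coefficients $\rho^{(p)}(b_1)\rho^{(q)}(b_2)$, which are nonzero for generic $(b_1,b_2)$ since $\rho$ is non-polynomial. If the separating functional vanished identically in $(a_1,b_1,a_2,b_2)$, all bivariate moments $\sum_j x_j^py_j^q$ would agree across distinct $M(\bi)$, contradicting the classical fact that such moments separate finite multisets in $\RR^2$. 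The univariate analogue handles step (ii) and the final readout $\sum_\bi \rho(a^R\bc_{(T)}(\bi)+b^R)$. Iterating $T$ times and invoking $T$-step $2$-WL separability of $\G,\G'$ produces the desired $\theta^*$, and analyticity upgrades this existence to the claimed a.e. statement.
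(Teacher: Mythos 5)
Your proof is correct, and its skeleton matches the paper's: reduce the almost-everywhere claim to the existence of a single separating $\theta^*$ via the fact that a not-identically-zero real-analytic function has a Lebesgue-null zero set (the paper cites Mityagin for this), then argue layer by layer that every refinement made by $2$-WL can be realized by the scalar PPGN update. Where you genuinely depart from the paper is in the key lemma that the scalar aggregation $\sum_j \rho(a_1 x_j + b_1)\rho(a_2 y_j + b_2)$ separates distinct multisets $\lms (x_j,y_j)\rms_j \subset \RR^2$. The paper's Lemma C.1 first builds an indicator-type continuous separator, approximates it by a finite sum of separable products via the Stone--Weierstrass theorem, then approximates each factor by a shallow network via Pinkus's universal approximation theorem, and concludes that at least one summand must separate. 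You instead differentiate the aggregation in the weights at $a_1=a_2=0$ to extract the bivariate power-sum moments $\sum_j x_j^p y_j^q$, scaled by $\rho^{(p)}(b_1)\rho^{(q)}(b_2)$, use the non-polynomiality of $\rho$ to make those scalings nonzero for suitable biases, and invoke the elementary fact that moments determine a finitely supported measure. This is more self-contained and bypasses both approximation theorems, at the cost of needing analyticity of $\rho$ already in the separation lemma (the paper's version requires only continuity there) -- which costs nothing here since the a.e.\ conclusion needs analyticity anyway. The remaining pieces agree in substance: the paper explicitly splits the ``previous color differs'' case (freeze $\phi^{(1,t)},\phi^{(2,t)}$ as nonzero constants and push separation through $\phi^{(3,t)}$) from the ``multiset differs'' case (freeze $\phi^{(3,t)}$); this is the content you compress into steps (i) and (ii) of your color-faithfulness invariant, and spelling out that the two cases use two different choices of the layer parameters, after which the finitely many measure-zero exceptional sets are unioned, would make that part of your argument airtight.
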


Next, we consider the issue of uniform separation. We assume that we are given a continuous family of graphs $\X$ in $\GnD$. Equivalently, by identifying graphs with the pairwise features derived from them, we can think of this as a family of tensors in $\RR^{n \times n \times D}$. The size of MLPs we require in this case to guarantee uniform separation on all graphs in  $\X$ will be $2\dim(\X)+1$. In particular, if $\X$ is the collection of weighted graphs that represent distance matrices of $n$ points in $\RR^3$, then $\X$ is of dimension $\dim(\RR^{3\times n})=3n $ and we will require PPGN networks with features of dimension $D_{(t)}=6n+1 $ for separation. If we consider all position-velocity pairs $X,V$, then the dimension of $\X$ will be $6n$, and thus the dimension required for separation will be $12n+1$. Finally, under the common assumption that in problems of interest, the domain $\X$ is some manifold of low intrinsic dimension $d$, then the feature dimension required for uniform separation on $\X$ will be just $2d+1$. The full statement of our theorem for uniform separation is:   

\begin{restatable}{theorem}{uniformwl}[uniform 2-WL separation]\label{thm:uniform}
Let $(n,D,T)\in \NN^3$. Let $\X \subseteq \GnD$ be a $\sigma$-subanalytic set of dimension $d$ and set $\Delta=2d+1$. Then for Lebesgue almost every $\theta$  we have that $\G, \G' \in \X$ can be separated by $T$ iterations of 2-WL if and only if 
$\cglobal\neq \cglobal' $, where $\cglobal$ and $\cglobal'$ are obtained by applying $\PPGNan$ to $\G$ and $\G'$, respectively.
\end{restatable}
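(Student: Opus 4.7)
The plan is to handle the two directions of the biconditional separately. The \emph{only if} direction is a direct induction on the iteration index $t$: if the $t$-th 2-WL colorings of $\G$ and $\G'$ coincide, then by inspection of \eqref{eq:PPGNagg} and \eqref{eq_PPGN_update_2} the $t$-th PPGN features must coincide as well for any parameters $\theta$, because the PPGN updates respect the 2-WL color refinement. The permutation-invariant sum readout then yields $\cglobal = \cglobal'$, so only the \emph{if} direction requires real work.

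For the \emph{if} direction, the strategy is to lift the pairwise separation result \cref{thm:pairs} to uniform separation on the subanalytic family $\X$ via a Whitney-style embedding argument. The first observation is that an analytic PPGN with width $\Delta = 2d+1$ can be specialized to run $2d+1$ independent scalar ($\Delta = 1$) PPGN networks in parallel, by constraining the MLP weight matrices to be block-diagonal with $2d+1$ one-dimensional blocks and letting each block carry its own parameter subvector $\theta^{(k)}$. Because the activation $\rho$ is analytic and all other PPGN operations are polynomial in the features, the scalar output $\cglobal(\theta^{(k)}, \G) \in \RR$ of each such slice is an analytic function of $\theta^{(k)}$ for every fixed $\G$.

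One then invokes the general uniform-separation lemma of \cite{dym2023low} for $\sigma$-subanalytic sets, in the following form: given an analytic family $\{f_\theta\}_{\theta \in \RR^N}$ of real-valued maps on a $\sigma$-subanalytic set $\X$ of dimension $d$ satisfying pairwise separation (for each pair $\G \not\sim \G'$ the set $\{\theta : f_\theta(\G) = f_\theta(\G')\}$ has Lebesgue measure zero), Lebesgue almost every $2d+1$-tuple $(\theta^{(1)}, \ldots, \theta^{(2d+1)})$ produces a map $\X \to \RR^{2d+1}$ that is injective modulo $\sim$. Setting $\sim$ to be 2-WL equivalence at depth $T$ and using the pairwise separation hypothesis supplied by \cref{thm:pairs} yields the theorem.

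The chief technical obstacle is verifying the hypotheses of this Dym--Gortler-type lemma in our concrete setting. One must check (i) that the quotient of $\X$ by 2-WL equivalence at depth $T$ remains $\sigma$-subanalytic of dimension at most $d$, so that the $2d+1$ bound is valid, and (ii) that the failure set $\{\theta : \cglobal(\theta,\G) = \cglobal(\theta,\G')\}$ is a proper analytic subvariety, hence of measure zero, for every 2-WL-separable pair; this last ingredient is exactly where \cref{thm:pairs} feeds in.
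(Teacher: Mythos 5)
Your proof is correct, and it takes a genuinely different route from the paper's. The paper applies the finite witness theorem of \cite{amir2023neural} (Corollary A20) \emph{layer-by-layer}: at each layer $t$, the theorem is applied to the set $\X_{(t)}$ of $t$-layer features (which is $\sigma$-subanalytic of dimension $\leq d$ as the image of $\X$ under the analytic first-$t$-layer map) and to the scalar per-layer update function $F_{ij}$, so that the $(2d+1)$-dimensional PPGN refinements simulate the 2-WL refinements at every iteration; a Fubini-type argument then combines the layer-wise measure-full sets of good parameters. You instead apply the witness theorem \emph{once}, to the end-to-end scalar $\cglobal(\theta,\G)$ viewed as an analytic function of the full scalar-network parameter vector $\theta$. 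The one-shot route is cleaner — it avoids the iterated Fubini argument and the auxiliary sets $\X_{(t)}$ — while the paper's route is more structured, producing a PPGN whose intermediate features provably track the 2-WL colorings. Both yield the same $\Delta = 2d+1$ bound since $\X$ and its analytic images share the dimension bound $d$, and both rely on the same block-diagonal specialization of the width-$(2d+1)$ MLPs into $2d+1$ independent scalar networks (with the first layer being a general $(2d+1)\times D$ matrix).

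Two small inaccuracies to flag. First, the uniform-separation lemma for $\sigma$-subanalytic sets that the paper invokes is the finite witness theorem of \cite{amir2023neural}, not \cite{dym2023low}; the latter contains a related result but not in the $\sigma$-subanalytic generality used here. Second, your verification item (i) is slightly off-target: the finite witness theorem does not require the quotient of $\X$ by 2-WL equivalence to be $\sigma$-subanalytic. The bound $2d+1$ comes solely from $\dim(\X) = d$; the equivalence relation enters only through the definition of the unseparable set $\N = \{(\G,\G') : \cglobal(\theta,\G) = \cglobal(\theta,\G')\ \forall\theta\}$, and the theorem handles this automatically. Your item (ii) — that for each 2-WL-separable pair the failure set in $\theta$ is a proper analytic subvariety, hence measure zero, which is exactly what \cref{thm:pairs} supplies, together with the converse inclusion from the fact that PPGN cannot separate what 2-WL cannot — is precisely the right way to identify $\N$ with the 2-WL-inseparable pairs.
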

In this theorem, we identify graphs $\G\in \X$ with the $n\times n \times D$ adjacency tensors describing them. A full formal definition of a $\sigma$-subanalytic set of $\RR^{n \times n \times D}$ is beyond the scope of this paper, and can be found in \cite{amir2023neural}. For our purposes, we note that this is a rather large class of sets, which includes sets defined by analytic and polynomial constraints, and their countable unions, as well as images of such sets under polynomial, semi-algebraic, and analytic functions. In particular Euclidean spaces like $\RR^{6\times n}$ are $\sigma$-subanalytic and their image under semi-algebraic maps, such as the map that takes $(X,V)$ to the graph $\G(X,V)$ weighted by its distances, will also be a $\sigma$-subanalytic set of dimension $\leq 6n$. 

\begin{proof}[Proof idea for Theorem \ref{thm:pairs} and Theorem \ref{thm:uniform} ] The proof of pairwise separation uses three steps. First,  we  show that at every layer, pairwise separation can be achieved  via aggregations of the form of \cref{eq:PPGNagg} with arbitrarily wide neural networks $\phi^{(1,t)}, \phi^{(2,t)} $, using density arguments. Next, since wide networks are linear combinations of shallow networks, it follows that there exists  scalar networks $\phi^{(1,t)}, \phi^{(2,t)} $ which achieves pairwise separation at every layer. Lastly, the analyticity of the network implies that this separation is in fact achieved almost everywhere at every layer, which then implies that pairwise separation can be achieved with almost all parameters across all layers simultaneously. 

For uniform separation, we use the finite witness theorem from \cite{amir2023neural} that essentially claims that pairwise separating analytic functions can be extended to uniformly separating functions by taking $2d+1$ copies of the functions (with independently selected parameters). The independence of the dimension throughout the construction on the depth $T$ of the PPGN network is obtained using ideas from \cite{hordan2023complete}.     
\end{proof}

\begin{figure}[t]
\vskip 0.2in

\begin{center}

\includegraphics[width=\columnwidth]{./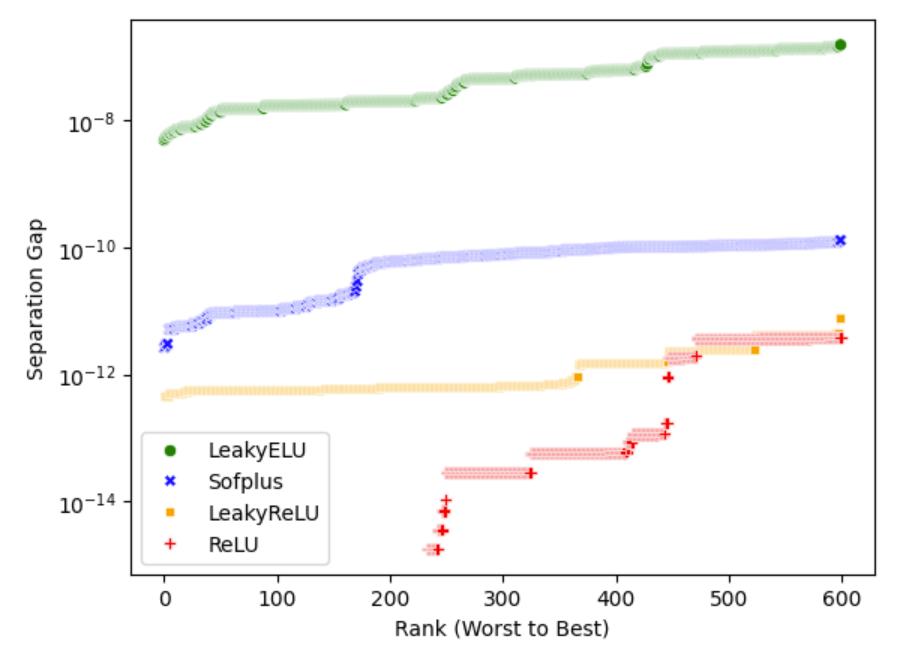}

\caption{Separation results for PPGN on the EXP dataset with a single neuron per node. As our theorem predicts, the Sofplus and leaky ELU activations which are analtyic can separate all graph pairs with a single neuron per node-pair. The separation gap is the norm of the difference between the representations of each pair of graphs, measuring how distinct they are.  Non-analytic ReLU and leaky ReLU activations yield consistently diminished separation in comparison with their analytic counterparts.}
    
\end{center}
\vskip -0.2in
\label{fig:sep}
\end{figure}





\subsection{Complexity}
The complexity of PPGN is dependent on the output dimension of each update step and the complexity of matrix multiplication. \cref{thm:uniform} requires an output dimension which is only linearly dependent on the intrinsic dimension of size $d \ll n$, thus the computational complexity is that of matrix multiplication, that is $\mathcal{O}(n^{\omega})$, where, with naive implementation, $\omega=3$. Yet GPUs are especially adept at efficiently performing matrix multiplication, and, using the Strassen algorithm, this exponent can be reduced to $\omega=2.81$ \cite{strassen}.

\subsection{Empirical Evaluation}
We empirically evaluate our claims via a separation experiment on combinatorial graphs that are $2$-WL distinguishable, yet are $1$-WL indistinguishable, via the EXP \cite{Abboud2020TheSP} dataset. It contains 600 $1$-WL equivalent graphs that can be distinguished by $2$-WL. We evaluate the \textit{separation} power of PPGN models with a random initialization and a \emph{single neuron} per node-pair on this dataset, as a function of the activation used. We ran the experiment with four different activations: the piecewise linear ReLU and LeakyReLU activations, and two roughly corresponding analytic activations: softplus and LeakyELU.  

The results of the experiment are depicted in Figure \ref{fig:sep}, which shows the difference in the global features computed by PPGN for each of the 600 graph pairs and four activation types. The results  show that the two analytic activations (as well as leakyReLU) succeeded in separating all graph pairs as predicted by our theory,  but ReLU activations did not separate all graphs. We do find that with 15 features per node-pair ReLU too is able to attain perfect separation. We also note that the two analytic activations attained better separation gaps  than the corresponding non-analytic activations. Finally, we note that in  some cases,  the separation attained is rather minor, with the difference between global features being as low as $10^{-12}$ .

\section{$2$-WL for Position-Velocity Pairs}\label{sec:v}
In this section we describe our second contribution. We consider the setting of particle dynamics tasks, in which the input is not only the particle positions $X$ but also the velocities $V$. In this setting, we define a weighted graph $\G(X,V)$ and prove that the $2$-WL test applied to such graphs is \emph{complete}. A naive application of $2$-WL to the velocities and positions separately would be insufficient as it does not guarantee a \textit{shared} rotation and permutation that relates the two. 

Therefore, the edge weights $w_{i,j}$ of the weighted graph $\G(X,V)$ will consist of the $4\times 4$ pairwise distance matrix of the vectors $x_i,x_j,v_i,v_j$ \emph{after centralizing $X$}, that is
\begin{equation}\label{eq:XVweights}
w_{i,j}=\mathrm{dist}\left(x_i-\frac{1}{n}\sum_{k=1}^n x_k,x_j-\frac{1}{n}
\sum_{k=1}^n x_k,v_i,v_j \right).
\end{equation}
Note that this edge feature is invariant to rotation and translation. Additionally, since translation does not affect velocity, we add the norms of the velocity vectors $v_i$ as node features $w_i=\|v_i\|$. 

We prove that the $2$-WL test applied to $\G(X,V)$ with the node and edge features induced from $(X,V)$, is  complete with respect to the action of permutation, rotation and translation defined in Subsection \ref{sub:notation}:
\begin{restatable}{theorem}{twoclouds}\label{thm:two-clouds}
    Let $X,V,X',V' \in \RR^{3 \times n}$. Let $\cglobal$ and $\cglobal'$ be the global features obtained from applying three iterations of the $2$-WL test to $\G(X,V)$ and $\G(X',V')$, respectively.  Then 
    $$\cglobal=\cglobal' \text{ if and only if } (X,V)\cong (X',V'). $$
    \end{restatable}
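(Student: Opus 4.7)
The plan splits into the two directions.

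The ``if'' direction is immediate: I would observe that each edge weight $w_{i,j}$ in \eqref{eq:XVweights} is a function of pairwise Euclidean distances of the centred positions and of the velocities, hence invariant under rotations of $(X,V)$ and translations of $X$; that the node feature $\|v_i\|$ is similarly invariant; and that the aggregation \eqref{eq:2wlagg} is permutation-invariant. So if $(X,V)\cong(X',V')$, the initial colorings of $\G(X,V)$ and $\G(X',V')$ agree after node relabeling, and induction over the three iterations yields $\cglobal=\cglobal'$.

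For the converse, assume $\cglobal=\cglobal'$. First I would use translation invariance: since the edge features use centred positions $x_i-\bar x$ and the node features are translation-invariant, $\G(X,V)$ and $\G(X-\bar x\mathbf{1}^{\top},V)$ have identical $2$-WL colorings, so I may assume $\bar x=\bar x'=0$ and need only produce a permutation $\tau\in S_n$ and a rotation $R\in O(3)$. Next I would form the auxiliary $2n$-point cloud $Z:=[X\mid V]\in\RR^{3\times 2n}$ (the centred positions followed by the velocities) and the complete distance graph $\G^{+}(Z)$ on $2n$ nodes indexed by $[n]\times\{\mathrm{P},\mathrm{V}\}$, equipped with node features recording the binary ``position-vs-velocity'' type. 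The key observation is that the initial edge color of a pair $(i,j)$ in $\G(X,V)$, namely the full $4\times 4$ distance matrix of $(x_i,x_j,v_i,v_j)$, records \emph{every} edge weight of $\G^{+}(Z)$ among the four nodes $\{(i,\mathrm{P}),(j,\mathrm{P}),(i,\mathrm{V}),(j,\mathrm{V})\}$, including the ``diagonal'' distances $\|x_i-v_i\|$ and $\|x_j-v_j\|$ that couple each position with its own velocity.

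I would then establish by induction on the iteration index $t$ that the $2$-WL colorings on the $n$-node graph $\G(X,V)$ refine the pair colorings at least as strongly as $2$-WL on the $2n$-node graph $\G^{+}(Z)$, while additionally tracking the particle-level pairing between position-type and velocity-type indices. Invoking the $3$D completeness theorem of \cite{rose2023iterations} on $\G^{+}(Z)$, combined with the binary type node feature which restricts any identifying permutation $\sigma\in S_{2n}$ to a type-preserving block form $(\tau_X,\tau_V)\in S_n\times S_n$, would yield $Rx_{\tau_X(i)}=x'_i$ and $Rv_{\tau_V(i)}=v'_i$ for some $R\in O(3)$. The pairing data carried at every iteration by the diagonal entries of the $4\times 4$-matrix edge features, reinjected via the combination step \eqref{eq:2wlcombine}, would then force $\tau_X=\tau_V=:\tau$, since any mismatch would alter the multiset of coupling distances attached to each particle. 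Restoring the translation removed at the start produces the required $(\tau,R,t)$.

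The main obstacle will be the simulation claim: the $n$-node $2$-WL aggregation on $\G(X,V)$ sums over indices in $[n]$, whereas $2$-WL on $\G^{+}(Z)$ sums over indices in $[2n]$. The proof will need to verify that the block-structured $4\times 4$ edge features of $\G(X,V)$ encode exactly the information needed to decompose each $[2n]$-sum into four $[n]$-sums over the type combinations, and that the combination step \eqref{eq:2wlcombine} keeps the diagonal coupling distances visible throughout all three iterations so that the pairing $\tau_X=\tau_V$ can be read off at the end.
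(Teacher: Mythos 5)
Your ``if'' direction is fine, and the reduction idea of packaging $(X,V)$ into a $2n$-point cloud $Z=[X\mid V]$ and simulating the $2n$-node $2$-WL on $\G^{+}(Z)$ by the $n$-node $2$-WL on $\G(X,V)$ is a genuinely different route from the paper, which instead re-runs a modified version of the reconstruction algorithm of Rose et al.\ directly on $\G(X,V)$ (via an ``origin enhanced profile'' and a custom barycenter lemma). However, your plan has a real gap that prevents it from closing.

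The gap is the \emph{translation ambiguity of the velocities}. Rose et al.'s completeness theorem for a point cloud in $\RR^{3}$ identifies the cloud only up to a \emph{joint} rigid motion, i.e.\ it reconstructs $Z$ up to permutation, rotation, \emph{and translation}. But the symmetry group in \cref{thm:two-clouds} acts on $(X,V)$ by translating $X$ only; $V$ is pinned to the origin of velocity space. So even after centring $X$, the barycenter of your $Z=[X\mid V]$ is $\tfrac{1}{2}\bar v$, and applying the $\G^{+}(Z)$-level argument yields $Z$ and $Z'$ only up to translation. This leaves the possibility that $\bar v\neq\bar v'$ while the $2n$-point clouds still agree up to a rigid motion, which does not imply $(X,V)\cong(X',V')$. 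Your $\G^{+}(Z)$ carries only the binary type label as a node feature, so it has no way to ``see'' $\|v_i\|$, which is exactly the datum that anchors the origin of velocity space. In the paper this is the heart of the matter: the node feature $\|v_i\|$ is fed into the $2$-WL coloring, and Lemma 1 of the paper's proof recovers the distance $\|\vec{0}_V-b_V\|$ from the $2$-WL multisets using the Barycenter Lemma of Rose et al.; the reconstruction algorithm is then run with an ``origin distance profile'' / ``origin enhanced profile'' that records distances both to the barycenter of $V$ and to $\vec{0}_V$. Without some analogue of this, your argument only proves the weaker statement that $(X,V)$ and $(X',V')$ agree after allowing an independent translation of $V$.

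Two secondary points. First, even if you augmented $\G^{+}(Z)$ with $\|v_i\|$ as a node feature, you could not simply ``invoke'' Rose et al.'s theorem, because it is stated for unlabeled point clouds; you would have to re-prove (or carefully extend) the reconstruction to account for the extra node feature, which is precisely the work the paper does. Second, your argument that the diagonal coupling $\|x_i-v_i\|$ forces $\tau_X=\tau_V$ is plausible in spirit (the paper also relies on the $X$--$V$ pairing being hard-coded into each $\bC_{(t)}(i,j)$), but as written it is a claim, not a proof: the reconstruction you would obtain from $\G^{+}(Z)$ produces a permutation $\sigma\in S_{2n}$ of the $2n$ \emph{nodes}, and you still need to show that the $n$-node $2$-WL refinement, not just the $2n$-node one, forbids any witnessing $\sigma$ that permutes the $(i,P)$ and $(i,V)$ blocks by different permutations of $[n]$. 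This requires threading the pairing information through all three iterations of the reconstruction, which is again what the paper's adaptation of the enhanced profile accomplishes.
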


\textit{Proof Sketch.} 
The proof is based on a careful adaptation of the completeness proof in \cite{rose2023iterations}. The original proof, which only considered position inputs $X\in \RR^{3\times n}$, reconstructs $X$ from its $2$-WL coloring, up to equivalence, in a two-step process: first, three `good' (centralized) points $x_i,x_j,x_k$ are reconstructed, and then the rest of $X$ is reconstructed from the coloring of the pairs containing these three points.  In our proof we show that a similar argument can be made in the $(X,V)$ setting, where now the three `good'  points could be either velocity or (centralized) position vectors, e.g. $x_i,x_j,v_k $, and they can be used to reconstruct both $X$ and $V$, up to equivalence. 
\qed{}

\section{Equivariant Universality on Point Clouds}\label{sec:equi}
In this section we discuss our third and final contribution, which is the construction of invariant and equivariant models for position-velocity pairs. 

We note that  the completeness of the $2$-WL test for position-velocity pairs, combined with our ability to simulate $2$-WL tests uniformly with analytic PPGNs, immediately implies that function of the form $\mathrm{MLP}\circ \cglobal(X,V) $, 
can approximate all continuous \emph{invariant} functions. Here MLP stands for a Multi-Layer-Perceptron, and $\cglobal(X,V) $ is the global feature obtained from applying PPGN, with the hyperparameter configurations detailed in the previous section, to $(X,V)$. Similar universality results can be obtained for the permutation invariant and rotation equivariant case \cite{hordan2023complete}.




Our goal is to address the more challenging case of universality for permutation, rotation and translation \emph{equivariant} functions $f:\RR^{6 \times n} \to \RR^{6 \times n}$ rather than invariant functions.  To achieve equivariant universality, we will define a simple pooling mechanism that enables obtaining node-level rotation, translation and permutation equivariant features $\Xout $ and rotation and permutation equivariant and translation invariant 
 $\Vout$  from the rotation and translation invariant features of PPGN, and the input $(X,V) $. This step involves six MLPs $\psi_{(1)},\ldots,\psi_{(6)} $ and is defined as
\begin{align}\label{eq:eqpoolx}
	x_{i}^{\mathrm{out}}= x_{i}&+ \psi_{(1)}(\bc_{(T)}(i,i))v_i\\ &+ \sum_{k} \psi_{(2)}(\bc_{(T)}(i,k))(x_k - x_i) \nonumber\\
    &+ \sum_{k\neq i} \psi_{(3)}(\bc_{(T)}(i,k)) v_k \nonumber
\end{align}
\begin{align}\label{eq:eqpoolv}
v_{i}^{\mathrm{out}}= & \psi_{(4)}(\bc_{(T)}(i,i))v_i \\ 
 &+ \sum_{k} \psi_{(5)}(\bc_{(T)}(i,k))(x_k - x_i) \nonumber\\
&+	 \sum_{k\neq i} \psi_{(6)}(\bc_{(T)}(i,k)) v_k \nonumber
\end{align}

The different MLPs correspond to different equivalence classes of permutations \cite{invandequign} and to the different point clouds. The main result of this section is that the equivariant pooling layer defined above yields a universal equivariant architecture for position-velocity pairs $(X, V) $.
\begin{restatable}{theorem}{universaltwo}\label{thm:equi-uni}
	Let $\epsilon>0$. Let $\Psi: \RR^{6 \times n} \to \RR^{6 \times n}$ be a continuous permutation, rotation, and translation equivariant function. Denote $(\Xout,\Vout)=\Psi(X,V) $. Then $\Psi$ can be approximated  to $\epsilon$ accuracy on compact sets in $\RR^{6\times n}$ via the composition of the equivariant pooling layers defined in \eqref{eq:eqpoolx} and \eqref{eq:eqpoolv} with the features $\bc_{(T)}(i,k)$ obtained from  $\PPGNan$ iterations applied to $\G(X,V)$, with $T=5, \Delta=12n+1$ and appropriate parameters $\theta$.
\end{restatable}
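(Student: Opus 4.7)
The plan is to reduce the equivariant approximation problem to an invariant one and then invoke the completeness of 2-WL on position-velocity pairs (Theorem \ref{thm:two-clouds}) together with the uniform analytic-PPGN separation result (Theorem \ref{thm:uniform}).

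First, I would decompose $\Psi(X,V)=(\Xout,\Vout)$ into per-index fields $f_i^X(X,V):=x_i^{\mathrm{out}}-x_i$ and $f_i^V(X,V):=v_i^{\mathrm{out}}$. Translation equivariance on positions forces $f_i^X$ to be translation invariant, while $f_i^V$ is automatically so; both are rotation equivariant, and permutation equivariance of $\Psi$ becomes joint permutation equivariance of the families $\{f_i^X\}_i$ and $\{f_i^V\}_i$. For generic $(X,V)$ the vectors $v_i$, $\{x_k-x_i\}_k$, $\{v_k\}_{k\neq i}$ span $\RR^3$, so a variant of the equivariant characterizations of \citet{villar} and \citet{puny2021frame} lets us write
\begin{equation*}
f_i^X(X,V)=\alpha^{(1)}_i v_i+\sum_k\alpha^{(2)}_{i,k}(x_k-x_i)+\sum_{k\neq i}\alpha^{(3)}_{i,k}v_k,
\end{equation*}
and analogously for $f_i^V$, where each $\alpha^{(r)}_\bullet$ is a continuous scalar invariant of the marked cloud $(X,V;i)$ or the doubly-marked cloud $(X,V;i,k)$. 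Configurations in which the frame degenerates form a measure-zero set and are handled by uniform continuity on the compact input domain.

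I would then argue that, for $T=5$, each invariant coefficient can be written as a continuous function of the pair feature $\bc_{(T)}(i,k)$: Theorem \ref{thm:two-clouds} already gives global completeness after three iterations, and two additional aggregation/combination rounds propagate that global information back into the individual pair colors so as to identify the doubly-pointed equivalence class of $(X,V;i,k)$. Applying Theorem \ref{thm:uniform} with intrinsic dimension $d=6n$ and hidden width $\Delta=2d+1=12n+1$ then guarantees uniform injectivity of the $\PPGNan$ realization of these colors on the image of the compact input set, for Lebesgue-almost every parameter choice $\theta$. Since this image is itself compact, the universal approximation theorem for MLPs yields $\psi_{(1)},\ldots,\psi_{(6)}$ realizing the required coefficients to within the desired tolerance; substituting them into \eqref{eq:eqpoolx} and \eqref{eq:eqpoolv} completes the $\epsilon$-approximation of $\Psi$.

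The principal obstacle is this local-to-global-and-back step: promoting the multiset-level completeness of Theorem \ref{thm:two-clouds} to the stronger assertion that each individual pair color $\bc_{(T)}(i,k)$ already captures the doubly-pointed equivalence class of $(X,V;i,k)$. The extra slack $T=5$ versus $T=3$ is precisely the budget needed for this amplification, but the argument must carefully track how permutations fixing $\{i,k\}$ interact with the aggregation step \eqref{eq:2wlagg}. A secondary technical point is the handling of spatially degenerate configurations where the equivariant frame fails to span $\RR^3$, which is managed by combining density arguments with uniform continuity on the compact domain.
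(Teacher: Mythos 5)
Your plan follows the same high-level outline as the paper's proof: decompose $\Psi$ into per-index vector fields, express each field in a position/velocity ``frame'' with invariant scalar coefficients via a Villar-type characterization, realize those coefficients as continuous functions of the 2-WL pair colors $\bc_{(5)}(i,k)$, and close with MLP universal approximation. However, there is one genuine gap and one questionable shortcut.

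The genuine gap is exactly the step you flag yourself: you need $\bc_{(5)}(i,k)$ to be a complete invariant of the \emph{doubly-marked} configuration $\bigl((x_i,v_i),(x_k,v_k),\lms (x_\ell,v_\ell)\m \ell\neq i,k\rms\bigr)$, not merely that the global multiset $\lms\bc_{(T)}(\bi)\rms$ separates unmarked configurations (which is what Theorem~\ref{thm:two-clouds} gives). You gesture at ``two additional rounds propagating global information back into pair colors,'' but this is not how the argument actually goes, and it is not a corollary of global completeness. The paper proves the marked-pair completeness directly in Lemma~\ref{lem:w-wl-edge} and Corollary~\ref{lem:w-wl-edge-two-points} by reworking the reconstruction machinery of \citet{rose2023iterations}: one extracts from $\mathbf{C_{(4)}}(i,j)$ or $\mathbf{C_{(5)}}(i,j)$ a triple containing $x_i$ (or one of $x_i,v_i$) that satisfies the cone condition, recovers its enhanced profile, runs the reconstruction algorithm, and then separately recovers $x_j$ (or $(x_j,v_j)$) from the distance profile of $(b,x_i,x_j)$, with case analysis on $\affdim(b,x_i,x_j)$. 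Without this lemma your coefficient functions $\alpha^{(r)}_{\bullet}$ are not well-defined functions of $\bc_{(5)}(i,k)$, and the rest of the argument has nothing to factor through.

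The questionable shortcut is the treatment of degenerate configurations. Writing $f_i^X$ in the frame $\{v_i,\,x_k-x_i,\,v_k\}$ is only canonical when this frame spans, and the resulting coefficients $\alpha^{(r)}_\bullet$ need not extend continuously to the degenerate locus---the frame collapses and coefficients can blow up. Uniform continuity of $\Psi$ on a compact set does not rescue the coefficients, since what must be continuous is the (not uniquely determined) coefficient map, not $\Psi$ itself. The paper avoids this entirely: Lemma~\ref{lem:orthovel} invokes Proposition~4 of \citet{villar}, which furnishes a globally valid (non-generic) decomposition with continuous coefficient functions, and in the polynomial case gives polynomial coefficients; density of equivariant polynomials in continuous equivariant functions \citep{Dym2020OnTU} then closes the gap. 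You should replace the ``generic frame + measure zero + uniform continuity'' argument with a direct appeal to that global characterization.
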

\begin{proof}[Proof overview]

This proof consists of two main steps which we believe are of independent interest. The first step provides a characterization of \emph{polynomial} functions $f:\RR^{6\times n}\to \RR^{6\times n}$ which are permutation, rotation and translation equivariant, in terms of expressions as in \eqref{eq:eqpoolx}-\eqref{eq:eqpoolv}, but where the functions $\psi_{(j)}\circ \bc_{(T)}(i,k)$  are replaced by polynomials $p_{i,k} $ with the same equivariant structure. This result gives a more explicit characterization of equivariant polynomials than the one in \cite{villar}, and includes velocities and not only positions. 

The second step of the proof shows that the $p_{i,k}$ polynomials can be approximated by the $\psi_{(j)}\circ \bc_{(T)}(i,k)$ functions. Thus every equivariant polynomial, and more generally any continuous equivariant functions, can be approximated by expressions as in \eqref{eq:eqpoolx}-\eqref{eq:eqpoolv}.

\end{proof}	

\subsection{General Framework for Equivariant Universality}

The approach presented for proving equivariant universality is \textit{not} limited to the $2$-WL test. Equivariant universality can be proven for any procedure that replaces the $\bc_{(T)}(i,k)$ features by injective, invariant embeddings of $((x_i, v_i), (x_k, v_k), \lms (x_j, v_j) \; | \; j\neq i,k \rms)$ and performes the pooling operations \eqref{eq:eqpoolx}-\eqref{eq:eqpoolv}. Moreover, equivariant universality only for functions on the position point clouds can be attained, as well. For more details, see \cref{sec:genfram}.


\section{WeLNet} 
To summarize, we've derived a model that is equivariant to the action of permutations, rotations, and translations on position velocity pairs $(X,V)$ This model, which we name \WeLNet, employs the following steps
\begin{enumerate}
	\item Encode $(X,V)$ as a weighted graph $\G(X,V)$ as defined in \eqref{eq:XVweights}. 
	\item Apply $\PPGN$ to this weighted graph.
	\item Apply the equivariant pooling mechanism defined in Equations \eqref{eq:eqpoolx}-\eqref{eq:eqpoolv} to obtain the architecture output $(\Xout,\Vout) $. 
\end{enumerate}
We have proven that this architecture is universal when PPGN is used for five iterations, and the internal MLPs in the PPGN architecture are shallow MLPs with analytic non-polynomial activations whose feature dimension can be as small as $12n+1$. Further implementation details are described in Appendix \ref{app:exp}.

\section{Experiments}
In this section, we experimentally evaluate the performance of \WeLNet on equivariant tasks. Full details on the experiments can be found in Appendix \ref{app:exp}. \footnote{Code is available at \url{https://www.github.com/IntelliFinder/welnet}}
\subsection{N-Body Problem}
The N-body problem is a classic problem in the physical sciences, in which the model has to predict the trajectory of bodies in Euclidean space based on their initial position, physical properties (e.g. charge), and initial velocity. We test our model on the N-body dynamics prediction task \cite{egnn}, a highly popular dataset that is a standard benchmark for Euclidean equivariant models. We find that \textbf{WeLNet achieves a new state-of-the-art result}. Results are shown in Table \ref{tab:test_mse}. 

In physical systems, there may be external forces present that act independently on the particles. Therefore, we also test our model on the N-Body problem with the natural external force fields of Gravity and a Lorentz-like force (a magnetic field.) We then compare the performance of \textbf{WeLNet} to baselines that were designed for such tasks, such as ClofNet \cite{clofnet}. We find that WeLNet has significantly better results with the gravitational force and comparable results with the Lorentz-like force field. Results are shown in \cref{force-results}.

\begin{table}[ht]
  \centering
  \caption{Test MSE for the N-body dynamics prediction task. We compare to the previous SOTA, CGENN \cite{ruhe2024clifford}, Transformer-PS \cite{ps}, and other baselines.}
  \label{tab:test_mse}

    \begin{sc}
    
  \begin{tabular}{c|c}
    \hline
    Method & MSE \\
    \hline
    Linear & 0.0819 \\
    SE(3) Transformer & 0.0244 \\
    TFN & 0.0155 \\
    GNN & 0.0107 \\
    Radial Field & 0.0104 \\
    EGNN & 0.0071 \\
    ClofNet  & 0.0065 \\
    FA-GNN  & 0.0057 $\pm$ 0.0002 \\
    CN-GNN  & 0.0043 $\pm$ 0.0001\\
    SEGNN  & 0.0043 $\pm$ 0.0002\\
    MC-EGNN-2  &0.0041 $\pm$ 0.0006 \\
    Transformer-PS  & 0.0040 $\pm$ 0.00001\\
    CGENN  & 0.0039 $\pm$ 0.0001\\
    \textbf{WeLNet} (Ours) & \textbf{0.0036 $\pm$ 0.0002} \\
    \hline
  \end{tabular}
  
  \end{sc}

\end{table}

\begin{table}[t]
\caption{Test MSE on the N-Body dynamic prediction task with Gravitational force and Lorentz-like force.}
\label{force-results}
\vskip 0.15in
\begin{center}
\begin{sc}
\begin{small}
\begin{tabular}{lccr}
\toprule
Method & Gravity & Lorentz \\
\midrule
GNN    & 0.0121 & 0.02755  \\

EGNN & 0.0906& 0.032\\

ClofNet    & 0.0082  $\pm$ 0.0003& 0.0265$\pm$ 0.0004&         \\
MC-EGNN    & 0.0073$\pm$ 0.0002& \textbf{0.0240$\pm$ 0.0010} \\

WeLNet (Ours)     & \textbf{0.0054 $\pm$ 0.0001} & \textbf{0.0238$\pm$ 0.0002}\\
\bottomrule
\end{tabular}
\end{small}
\end{sc}
\end{center}
\vskip -0.1in
\end{table}

\subsection{Conformation Generation }
Generating valid molecular conformations from a molecular graph has recently become a popular task due to the rapid progress in Generative ML research \cite{DBLP:conf/nips/LuoSXT21, zhou2023unimol}. We test the ability of WeLNet to generate the 3D positions of a conformation of a molecule from its molecular graph (which only has molecular features and adjacency information) via a generative process such that it reliably approximates a reference conformation. We find that \textbf{WeLNet} achieves competitive results on the COV target and obtains \textbf{a new state of the art (SOTA) result on the MAT target with an improvement of over $\mathbf{15 \%}$} from the previous SOTA.

\begin{table}[H]
  \centering
  \caption{ MAT and COV scores on the GEOM-QM9 dataset. Best results are in bold and second best in red. We compare to the previous SOTA, UniMol \cite{zhou2023unimol} and other baselines.}
  \label{tab:geom-qm9}

\begin{tabular}{|c|c|c|c|c|}
    \hline 
\multirow{2}{*}{Model} & \multicolumn{2}{c|}{COV $(\%) \uparrow$} & \multicolumn{2}{c|}{MAT ($\mathring{\textrm{A}})\downarrow$}\\
        \cline{2-5}
        & Mean & Median & Mean & Median \\
\hline
RDKit & 83.26 & 90.78 & 0.3447 & 0.2935 \\ 
\hline
CVGAE & 0.09 & 0 & 1.6713 & 1.6088 \\ 
\hline
GraphDG & 73.33 & 84.21 & 0.4245 & 0.3973 \\ 
\hline
CGCF & 78.05 & 82.48 & 0.4219 & 0.39 \\ 
\hline
ConfVAE & 80.42 & 85.31 & 0.4066 & 0.3891 \\ 
\hline
ConfGF & 88.49 & 94.13 & 0.2673 & 0.2685 \\ 
\hline
GeoMol & 71.26 & 72 & 0.3731 & 0.3731 \\ 
\hline
DGSM & 91.49 & 95.92 & 0.2139 & 0.2137 \\ 
\hline
ClofNet & 90.21 & 93.14 & 0.2430 & 0.2457 \\

\hline

GeoDiff & {92.65} & {95.75} & {0.2016} & {0.2006} \\ 
\hline
DMCG & \color{red}{94.98} & \color{red}{98.47} & {0.2365} & {0.2312} \\
\hline
UniMol & \textbf{97.00} & \textbf{100.00} &  \color{red}{0.1907} & \color{red}{0.1754}\\
\hline
\textbf{WeLNet} (Ours)  & {92.66}& {95.29}  & \textbf{0.1614} & \textbf{0.1566}\\
\hline

\end{tabular}

\end{table}

\section{Conclusion and Future Work}
This manuscript has addressed three major challenges in the application of $2$-WL to point clouds: proof of a uniform injective simulation of $2$-WL on point clouds via the PPGN architecture, proof of the completeness of $2$-WL for positions and velocities under joint symmetries, and achievement of equivariant universality via $2$-WL and a simple pooling operator. These contributions are backed up by experiments demonstrating the practical efficacy of WeLNet, an architecture based upon $2$-WL that is complete in practice. 



For WeLNet to be provably complete, it is required to simulate $2$-WL, and this inevitably comes with a non-trivial computational cost. We may relax the completeness guarantee and consider a distance matrix with a distance cutoff and use a sparse variant of $2$-WL for improved running time and perhaps generalization. It is the subject of future work to determine whether such a relaxation can be made while maintaining strong empirical results and a notion of geometric completeness.


\section*{Acknowledgements}
ND and SH are supported by Israeli Science
Foundation grant no. 272/23. SH would like to thank the Applied Mathematics Department at the Technion for supporting this research via its `Scholarship for Excellence'.

\section*{Impact Statement}
This paper presents work whose goal is to advance the field of Machine Learning. There are many potential societal consequences of our work, none which we feel must be specifically highlighted here.
\bibliography{example_paper}
\bibliographystyle{icml2024}

\newpage
\appendix
\onecolumn
\section{Related Work and Extensions}\label{app:extensions}
\subsection{Related Work}




\paragraph{$k$-WL in the Point-Cloud Setting} Initial research on completeness on point clouds has provided Lipschitz  continuous, polynomial time algorithms for distinguishing non-isometric point clouds \cite{Kurlin2022PolynomialTimeAF, Kurlin2023SimplexwiseDD} with applications in ML \cite{Balasingham2022MaterialPP, SchwalbeKoda2023InorganicSM}. These algorithms produce complete invariants are
represented as a ‘multi-set of multi-sets’ and thus do not allow for gradient-descent-based optimization. 

The $k$-WL hierarchy has been initially used as a theoretical tool to assess the expressive power of GNNs for combinatorial graphs \cite{xu2018powerful, morris2018weisfeiler}.  Recently, the expressive power of GNNs on point clouds has been evaluated via these tests.  \citet{pozdnyakov2022incompleteness} showed that $1$-WL tests are not complete when applied to point clouds. \citet{joshi2022expressive} addressed $1$-WL tests with additional combinatorial edge features and possibly equivariant as well as invariant features. 

In contrast to $1$-WL, the  $2$-WL test \emph{is} complete when applied to 3D point clouds \cite{hordan2023complete,rose2023iterations}. This inspired GNNs \cite{li2023distance} that obtain strong empirical results on real-world molecular datasets. 
However, despite the impressive performance of \cite{li2023distance}, the empirical evaluations are limited to \textit{invariant} tasks. To the best of our knowledge, there has not been a $2$-WL based \emph{equivariant} GNN that exhibits strong performance on real-world tasks before this manuscript.

There are also variants of graph isomorphism tests that achieve completeness. For instance, it has been recently shown that subgraph GNNs \cite{zhang2021nested, frasca2022understanding} can also achieve invariant completeness \cite{li2024completeness}. However, this result does not lend itself to an equivariant architecture. Another approach to obtaining an equivariant architecture is via canonicalization, where a representative of each equivalence class under the symmetry relation is chosen and is then processed by a non-invariant architecture. Unweighted Frame Averaging \cite{puny2021frame} is a known canonicalization approach, yet it has been recently been proven that this approach cannot provably produce continuous functions \cite{dym2024equivariant}. Weighted frame averaging \cite{pozdnyakov2023smooth} can efficiently perform canonicalization and is provably continuous \cite{pozdnyakov2023smooth,dym2024equivariant}. However, the model in \cite{pozdnyakov2023smooth} assumes that there
exists a lower bound to the distance between pairs of points, which is not the case for the general case of point clouds (which allows repetitions of points).



GNNs simulating higher-order $k$-WL tests have been proposed in \cite{NEURIPS2019_provably,morris2018weisfeiler}, but these works too have focused on pairwise separation. \cite{hordan2023complete} discusses uniform separation with computational complexity only modestly higher than what we use here, $O(n^4\log(n))$ vs. $O(n^{\omega})$, with $\omega \leq 3$, but the GNN discussed there used sorting-based aggregations, which are not commonly used in practice. To the best of our knowledge, this is the first work in which popular high-order GNNs are shown to uniformly separate graphs with low feature cardinality. 

\paragraph{Simulation via a Transformed Graph.} Our definition of 2-WL is often referred to in the literature as 2-Folklore-WL, which is a test equivalent in its expressive power to 3-WL as defined in the results of \cite{jogl2024expressivity}, see \cite{CaiFuererImmerman1992}. That is, two non-isomorphic graphs can be distinguished by 2-FWL if and only if they can be distinguished by 3-WL. A transformed graph for 3-WL is a graph where each node corresponds to a 3-tuple of indices and an edge connects two such nodes if their corresponding 3-tuples differ only by a single entry, that is they are 'neighbors' in the definition of 3-WL.

The simulation of MPNN on this transformed graph corresponding to 3-WL (as defined in \cite{jogl2024expressivity}, consists of message-passing layers in which each message-passing layer computes $O(n^3)$ aggregations, corresponding to the $n^3$ nodes, and each such aggregation is of $n$ vector-valued elements, corresponding to the neighbors of each node in the transformed graph.

Applying the Finite Witness Theorem from \cite{amir2023neural} (the tool we used in our proof of Theorem 4.2) and following the analysis from \cite{amir2023neural, hordan2023complete} regarding the complexity of uniformly simulating MPNNs, yields that to obtain uniform separation as in Theorem 4.2, this MPNN simulation would require a computational complexity of $O(n^4)$. This is due to an inherent bottleneck in the MPNN approach, which is the $O(n^3)$ injective embeddings of multi-sets of $n$ elements, and each such embedding requires, by using the best-known complexity of achieving multiset injectivity via MLPs, $O(n)$ complexity by Theorem 3.3 in \cite{amir2023neural}.

In contrast, we prove (\cref{thm:uniform}) that PPGN can uniformly simulate an equally expressive test with a complexity of only $O(n^w)$, where $w$ is the exponent denoting the complexity of matrix multiplication, which can be as low as $w=2.81$, and under naive implementation it holds that $w=3$. Thus our complexity guarantees are non-trivially stronger as compared with transformed graphs via MPNNs.

\subsection{Extensions}
\subsubsection{General dimension $d\neq 3$}

The construction developed in this manuscript is aimed at the practical scenario of machine learning where the point clouds reside in Euclidean space, i.e. $d=3$. Our proofs can naturally be extended to $d>3$. For instance, the universal representation of Euclidean equivariant polynomials, \cref{lem:orthovel}, is defined for $d=3$ but can be immediately applied to $d>3$ exactly as it is written. Furthermore, \cref{lem:w-wl-edge} can be generally formulated as embedding vectors of the form $(x_1, \ldots, x_{d-1}, \lms x_k \rms_{k=1}^{n})$, by considering higher order $k$-WL tests, specifically $(d-1)$-WL for $d$-dimensional point clouds. High order $k$-WL can be implemented by the high order variation of PPGN \cite{NEURIPS2019_provably}, and similar separation results hold, due to the density of separable functions.  

\subsubsection{Equivariance to proper rotations}
\cref{lem:orthovel} can naturally be applied on Proposition 5 \cite{villar} to obtain similar pooling operators that respect the Special Orthogonal group (proper rotations). Complete tests for $SO(3)\times S_n$ can be obtained by replacing the classical $2$-WL test with the similar geometric version ($2$-Geo) in \cite{hordan2023complete}.

\subsubsection{Multiple equivariant node features}

In some settings, such as using multiple position `channels' \cite{levy2023multiple}, we may wish to incorporate not only a pair of point clouds (positions and velocities) but a finite collection of such pairs. The results in this manuscript naturally extend to such settings, by defining edge weights $w_{ij}$ to be the $2k\times 2k$ distance matrix of the $k$ features per each point.

\subsection{Update Step of PPGN}\label{sub:update}

There are two approaches for implementing the `update' step in PPGN. \cite{NEURIPS2019_provably} originally offered to concatenate the $\bc_{(t)}(i,j) \in \RR^{D}$ coloring with $\tilde{\bc_{(t)}}(i,j) \in \RR^{D}$ to obtain $\bc_{(t+1)}(i,j) := (\bc_{(t)}(i,j), \tilde{\bc_{(t)}}(i,j)) \in \RR^{2D}$. This clearly maintains all the information, yet the dimensionality is exponentially dependent on the timestamp.

We use the approach implemented in \cite{li2023distance}, which is to multiply element-wise the two vectors, i.e. $\bc_{(t+1)}(i,j) := \bc_{(t)}(i,j) \odot \tilde{\bc_{(t)}}(i,j)) \in \RR^{D}$, which maintains a constant embedding dimension throughout the color refinement process.

\subsection{General Framework for Proving Equivariant Universality}\label{sec:genfram}

For simplicity, we'll focus on the case where we wish to approximate continuous equivariant functions on point clouds $X\in \RR^{3\times n}$, that is $f:\RR^{3\times n} \to \RR^{3\times n} $ which satisfies $f(RXP^{T} + t)=Rf(X)P^{T }+t$ for a representation of a permutation action $P$, a rotation (proper or improper) $R$ and a translation vector $t\in \RR^{3}$ (with vector addition element-wise).

$$f(X)_i = x_{i} + \sum_{k} \phi(x_i,x_j, \lms x_k\m k \neq i,j \rms)(x_k - x_i)$$

where $\phi$ is a continuous, euclidean invariant function and $x_i := X_i\in \RR^{3}$.

By \cite{dym2023low}, $\phi(x_i,x_j, \lms x_k\m k \neq i,j \rms)$ can be expressed as $\phi(x_i,x_j, \lms x_k\m k \neq i,j \rms) = \Tilde{\phi}\circ Embed(x_i,x_j,\lms x_k \;| \; k\neq i,j \rms)$ where $\Tilde{\phi}$ is continuous.

Any model that can produce an \textit{injective}, rotation and translation invariant $Embed$ function of the vectors, multi-set concatentation $(x_i,x_j, \lms x_k \; | \; k\neq i,j \rms)$ and approximates the the above aggregation is provably equivariant universal. We have shown 5 iterations of $2$-WL applied to point clouds can produce the required injective invariants in \cref{lem:w-wl-edge}.

\newpage
\section{Details on experiments}\label{app:exp}
\subsection{Implementation details}\label{subsec:imp-dets}

\WeLNet is an Euclidean equivariant neural-network-based GNN that has two main color refinement paradigms. One of them is $2$-WL equivalent and the other is $1$-WL equivalent (message passing). The interaction between these two paradigms yields the equivariant pooling operations \cref{eq:eqpoolx} and \cref{eq:eqpoolv}. We use PPGN\cite{NEURIPS2019_provably} to simulate the $2$-WL color refinement and base our message-passing-like color refinement on EGNN\cite{egnn}.

\WeLNet is defined as a successive application of convolution blocks, that involve a parameter-sharing scheme. We first initialize node-wise hidden states $\h:=\lms h_i\rms_{i=1}^{n}$ and edge features 
$\mathrm{E}:=(e_{ij})_{i,j=1}^{n}$ that contain node-wise features, such as atom number for molecules, and pair-wise features, such as magnetic attraction or repulsion for particles, respectively, as in EGNN. We also initialize a shared $2$-WL equivalent 
 analytic PPGN architecture that is used throughout all the convolution steps, denoted as $\PPGNan$.

Each convolution layer, denoted as \textbf{WeLConv}, takes as input a position point cloud, $X\in \Rthree$, and a velocity point cloud, $V\in \Rthree$, and outputs an updated position and velocity point clouds,  $(\h^{\mathrm{out}}, X^{\mathrm{out}}, V^{\mathrm{out}})=\welconv(\h, \mathrm{E}, \PPGNan, X,V)$.

We define the Convolution Layer of \WeLNet, i.e. $\welconv$, which is based on EGNN, as

\begin{align}
    \bc(i,j) &= \PPGNan(X,V)_{i,j}\\
    m_{ij} &= \phi_{e}(h_i, h_j, e_{ij}, \bc(i,j))\\
    m_i&= \sum_{j} m_{i,j}\\
    x_{i}^{\mathrm{out}} &= x_{i} + \phi_{n}(m_{i})v_i + \sum_{j} \phi_{x}(m_{i,j})(x_j - x_i) + \sum_{j\neq i}\phi_{v}(m_{i,j})v_j\\ \label{egnn-x-pool}
    v_i^{\mathrm{out}} &= \hat{\phi_{n}}(m_{i})v_i + \sum_{j} \hat{\phi_{x}}(m_{i,j})(x_j - x_i) + \sum_{j\neq i}\hat{\phi_{v}}(m_{i,j})v_j\\\label{egnn-v-pool}
    h_{i}^{out} &= \phi_h(h_i, m_i)
\end{align}

where the $\phi$'s are MLPs. 

Note that  Equations \ref{egnn-x-pool}-\ref{egnn-v-pool} are of the same form as the equivariant pooing layers defined in Equations \ref{eq:eqpoolx}-\ref{eq:eqpoolv}. Thus, our theory guarantees the universality of this construction with $T=5$ PPGN iterations and a single Convolution iteration, and dimensionality of $12n+1$ neurons for the size of the vector $\bc(i,j)$. In practice, we use $T=2$ PPGN iterations and $4$ Convolutions.

\subsection{N-Body Experiment}

The N-body problem is a physical dynamics problem that arises in a number of physical settings, such as the solar system, electrical charge configurations and double-spring pendulums, in which a model aims to predict the trajectory of objects that mutually assert forces on one another based on a physical law, e.g. gravity in the solar system. The contemporary standard benchmark is a dynamical system in $3\mathrm{D}$ space that models the time-dependent trajectory of $5$ particles with an electrical charge.
This task has been introduced in \cite{fuchs-body}, and \cite{egnn} extended the Charged
Particles N-body experiment from \cite{KipfFWWZ18} to a 3
dimensional space, which remains the standard setting for this task. Each particle carries a position coordinate, negative or positive charge, and an initial velocity. This system is equivariant to the symmetries described in this manuscript for position velocity point cloud pairs, where rotation and permutation act simultaneously on both position and velocity point clouds, while translation only acts on the position point cloud. This system respects these symmetries because the electromagnetic force between particles is equivariant to rotations and permutations.

\subsubsection{Dataset}

Following \cite{egnn}, we sampled 3,000 trajectories for training, 2,000
for validation and 2.000 for testing. Each trajectory has
a duration of 1.000 timesteps. For each trajectory, we
are provided with the initial particle positions
, their initial velocities
and their respective charges. The task is to estimate the positions of the five particles after 1.000 timesteps. We optimize
the averaged Mean Squared Error of the estimated position
with the ground truth one when training and test our performance using MSE, as well.

\subsubsection{Configuration}

We ran our experiment on an NVIDIA A40 GPU with CUDA toolkit version 12.1. Below is a table of the model configuration

\subsubsection{Results}

Results for the N-Body task are those reported by the papers. In the classical N-Body dataset (no external forces), we compare to other baselines including MC-EGNN \cite{ levy2023multiple}, CN-GNN \cite{DBLP:conf/icml/KabaMZBR23}, SEGNN \cite{brandstetter2021geometric}, FA-GNN \cite{puny2021frame}, ClofNet\cite{clofnet} and EGNN \cite{egnn}. 

In the external force experiment, We compare to ClofNet \cite{clofnet}, MC-EGNN \cite{levy2023multiple} and EGNN \cite{egnn}. Results for the custom Force task are reproduced.

\begin{table}[t]
\caption{Configuration of the \WeLNet Architecture.}
\label{config-table}
\vskip 0.15in
\begin{center}
\begin{sc}
\begin{tabular}{lcr}
\toprule
Hyperparameter & Value \\
\midrule
Activation    & Scaled Softplus  \\

Edge Features Dim & 128\\

WL Features Dim    & 32        \\
Learning rate  & 1e-3 \\

Optimizer     & Adam\\
Scheduler & StepLR\\
Number of Convolutions & 4\\
$2$-WL Iterations (T) & 2 \\ 

\bottomrule
\end{tabular}
\end{sc}
\end{center}
\vskip -0.1in
\end{table}
\subsubsection{Implementation}
For implementing $\PPGNan$ we first embedded the distances via exponential radial basis functions, relying on the implementation of these functions by \cite{li2023distance}. We rely on the implementation of PPGN by \cite{NEURIPS2019_provably} and incorporate modifications to this architecture introduced by \cite{li2023distance}. For the message-passing color refinement we rely on EGNN and MC-EGNN\cite{egnn,levy2023multiple} and incorporate multiple position channels as introduced in \cite{levy2023multiple}.

\subsection{Conformation Generation}

A conformation of a molecule is a $3\mathrm{D}$ such that the intra-molecular forces are in an equilibrium. Conformation generation aims to predict stable 3D conformations from 2D molecular graphs, which are a more natural representation of molecules \cite{DBLP:conf/icml/ShiLXT21}. We follow \cite{DBLP:conf/icml/ShiLXT21} and essentially estimate the gradients of the force fields to `move' the atoms towards an energy equilibrium. The key challenge with this approach is for a model to respect the equivariance of these gradients to rotations and translations and to accurately predict them. We leverage the geometric information embedded in the distance matrix of the positions and using \WeLNet we can equivariantly to estimate the direction of the gradient field along which the atoms move. This is a generative task in which we begin with a randomly sampled point cloud and then iteratively update its positions via the estimated gradient field to obtain a stable molecular conformation.

The metrics presented in \cref{tab:geom-qm9} are defined as

\begin{align}
    \mathrm{COV}(S_g,S_r) = \frac{1}{|S_r|} |\{ R\in S_r \m \mathrm{RMSD}(R, \hat{R})< \delta , \hat{R} \in S_g \}| \\
    \mathrm{MAT}(S_g, S_r) = \frac{1}{|S_r|} \sum_{R\in S_r} \underset{\hat{R}\in S_g}{\min} \mathrm{RMSD}(R,\hat{R})
\end{align}

for a given threshold $\delta$ where $S_g$ and $S_r$ denote generated and reference conformations, respectively, and the RMSD of heavy atoms  measures the distance between generated conformation and the reference. In the GEOM-QM9 \cite{geom},which  consists of small molecules up to 29 atoms, dataset this threshold is defined to be $\delta=0.5$. For further details please refer to \cite{DBLP:conf/icml/ShiLXT21}.

\subsubsection{Implementation Details}

We use the standard \WeLNet architecture described in septh in \cref{subsec:imp-dets}, but incorporate into the ConfGF \cite{DBLP:conf/icml/ShiLXT21} in similar fashion to that performed by ClofNet \cite{clofnet}. This means that we don't use EGNN as the message passing layer, but rather a Transformer-based GNN. For complete implementation details see \cite{DBLP:conf/icml/ShiLXT21}.

\subsubsection{Configuration}
The configuration is identical to that of the N-Body problem but we use different hyperparameters for the conformation generation task, as outlined in \cref{config-table-conf-gen}.

\subsubsection{Results}

Results of the other baselines are taken from \cite{zhou2023unimol}.  The compared models for molecular generation include RDKit\cite{rdkit}, CVGAE \cite{cvgae}, GraphDG \cite{GraphDG}, CGCF \cite{cgcf}, ConfVAE \cite{confvae}, ConfGF \cite{DBLP:conf/icml/ShiLXT21}, GeoMol \cite{geomol}, DGSM \cite{dgsm}, ClofNet \cite{clofnet}, GeoDiff \cite{geodiff}, and DMCG \cite{dmcg}. 
\begin{table}[t]
\caption{Configuration of the \WeLNet Architecture in the Conformation Generation task.}
\label{config-table-conf-gen}
\vskip 0.15in
\begin{center}
\begin{sc}
\begin{tabular}{lcr}
\toprule
Hyperparameter & Value \\
\midrule
Activation    & Scaled Softplus  \\

Edge Features Dim & 288\\

WL Features Dim    & 256        \\
Learning rate  & 3E-4 \\

Optimizer     & Adam\\
Scheduler & None\\
Number of Convolutions & 4\\
$2$-WL Iterations (T) & 3 \\ 
Generation Elapsed Time & 3.5 days\\
\bottomrule
\end{tabular}
\end{sc}
\end{center}
\vskip -0.1in
\end{table}

\subsection{Separation experiment}

In \cref{fig:sep}, we show the separation gap of 4 different activation function. Formally, the separation gap is the absolute value between the output of the first element of graph pair and the second. Our aim was to compare the separation power of two very popular activations, ReLU and LeakyReLU as copmared to, arguably, their analytic, non-polynomial approximations, Softplus and LeakyELU. LeakyELU is an activation not commonly used, which we have devised in order to approximate LeakyReLU. It is defined as 

\begin{equation}\label{eq:leaky-elu}
        \mathrm{LeakyELU}(x) = \mathrm{ELU}(x) - \alpha\cdot\mathrm{Softplus}(-x)
\end{equation}

for $\alpha > 0$.

where ELU and Softplus are popular activation functions defined as
\begin{align}
    \mathrm{ELU}(x) = & \begin{cases}
x, & x\geq 0\\
\exp(x)-1,  & x< 0
\end{cases}  \\
     \mathrm{Softplus}(x) = & \log(1 + \exp(x)) 
\end{align}

The experiment demonstrates that the separation power of analytic, non polynomial activations can be readily observed when comparing piecewise linear activations and their analytic, non-polynomial approximations. This falls in line with the separation results introduced in \cref{thm:uniform}, which for combinatorial graphs separation is guaranteed for any analytic, non-polynomial function only with one neuron width throughout the MLPs in the PPGN blocks.
\newpage
\section{Proofs}

\pairsep*

\begin{proof}[Proof of Theorem \ref{thm:pairs} ] 
We recall that the PPGN architecture as defined in the main text is initialized with input edge coloring $\bc_{(0)}(i,j)$ assigned from the input graph $\G$ (\cref{eq_PPGN_init}), and then iteratively applies $t=0,\ldots,T-1$ update steps as per \cref{eq:PPGNagg,eq_PPGN_update_2},
\begin{equation*}
\tilde{\bc}_{(t+1)}(\bi)=\sum_{j=1}^n \phi^{(1,t)}\left(\bc_{(t)}(i_1,j) \right)\odot \phi^{(2,t)}\left(\bc_{(t)}(j,i_2) \right)
\end{equation*}
and 
\begin{equation*}
\bc_{(t+1)}(\bi)=\tilde{\bc}_{(t+1)}(\bi) \odot \phi^{(3,t)}\left(\bc_{(t)}(\bi)\right).
\end{equation*}
 After $T$ iterations, a final graph-level representation $\cglobal$ is computed via
 $$\cglobal=\sum_{\bi\in[n]^2} \phi_{\text{READOUT}}(\bc_{(T)}(\bi)).$$

The various functions $\phi^{(s,t)}$ and $\phi_{\text{READOUT}}$ defined above are all shallow networks of the form $\sigma(a\cdot x+b) $, where $\sigma$ is an analytic non-polynomial function, and $a,b \in \RR$ (with  the possible exception of the first layer where the dimension of $a$ is the input feature dimension $D$. 

To prove pairwise separation, let $\G,\G'$ be two graphs that are separable by $T$ iterations of $2$-WL. We need to show that for Lebesgue almost every choice of the parameters of the MLPs $\phi^{(1,t)},\phi^{(2,t)},\phi^{(3,t)}$ and $\phi_{\text{READOUT}}$, the final features $\cglobal$ and $\cglobal'$ computed from $\G,\G'$ respectively are different. In fact, since $\cglobal$ and $\cglobal'$ are an analytic function of their parameters (for fixed inputs $\G,\G'$ respectively), it is sufficient to show that there \emph{exist} parameters such that $\cglobal \neq \cglobal'$. This is because an analytic function that is not identically zero is zero only on a set of Lebesgue measure zero (see Proposition 3 in \cite{mityagin2020}).

Due to the way PPGN simulates the $2$-WL process, we essentially need to show that, for every index pair $i,j$ and natural $t\leq T$, that if  
$\bc_{(t)}(i,j)\neq \bc_{(t)}'(i,j)$, or 
$$\lms \left( \bc_{(t)}(i,k),\bc_{(t)}(k,j) \right)\rms_{k=1,\ldots,n} \neq \lms \left( \bc_{(t)}'(i,k),\bc_{(t)}'(k,j) \right)\rms_{k=1,\ldots,n}  $$
then there exists a choice of the parameters $\theta_t$ of the $t$-th layer MLPs $\phi^{(s,t)}, s=1,2,3 $ such that
\begin{equation}\label{eq:Fsep1}
\bc_{(t+1)}(i,j)=F_{ij}(\bc_{(t)}(i,j),\theta_t)\neq F_{ij}(\bc_{(t)}'(i,j),\theta_t)= \bc_{(t+1)}'(i,j),\end{equation}
where we used $F_{ij}$ to denote the function creating the $t+1$ coloring of the $i,j$ entry from all previous colorings $\bc^{(t)} $, and $\theta_t$ to denote the parameters of these functions $F_{ij}$.

Additionally, we need to show that if after the $T$ PPGN iterations are concluded the finite feature multisets are distinct, that is 
$$\lms \bc_{(T)}(i,j) \rms_{1\leq i,j \leq n} \neq \lms \bc_{(T)}'(i,j) \rms_{1\leq i,j \leq n}  $$
then there exists a choice of the parameters of $\phi_{\text{READOUT}}$ such that
$$\sum_{i,j} \phi_{\text{READOUT}}(\bc_{(T)}(i,j)) \neq \sum_{i,j} \phi_{\text{READOUT}}(\bc_{(T)}'(i,j)).$$
This last part was already proven in \cite{amir2023neural}. Our goal is  to prove the first part.

Let us first assume that $\bc_{(t)}(i,j)\neq \bc_{(t)}'(i,j)$. Our goal is to show that there exists  $\theta_t$ such that Equation \eqref{eq:Fsep} holds. 

To show the existence of such parameters, we can choose the linear part of $\phi^{(1,t)}$ and $\phi^{(2,t)} $ to be zero and the bias to be non-zero so that we obtain $\tilde{\bc}_{(t+1)}(\bi)=\tilde{\bc}_{(t+1)}'(\bi) \neq 0 $. We can then choose the parameters of $\phi^{(3,t)}$ so that $\phi^{(3,t)}\left(\bc_{(t)}(\bi)\right)\neq \phi^{(3,t)}\left(\bc_{(t)}'(\bi)\right) $ which gives us what we wanted. Indeed, such parameters for $\phi^{(3,t)}$ can always be chosen: To show this we just need to show that if $y\neq y'$  are   $K$ dimesional  ($K=1$ for $t>0$ but not necessarily when $t=0$). we can choose a $K$ vector $a$ and bias $b$ such that 
$$ a\cdot y-b=0\neq a \cdot y'-b:=c.$$
Once applying a general analytic non-polynomial activation function $\sigma$ this equality may not be preserved. However, there will be a scaling  $s\in \RR$ of these parmaters such that 
$$\sigma( sa\cdot y'-sb)-\sigma(sa\cdot y-sb)=\sigma(sc)-\sigma(0)\neq 0, $$
because $\sigma$ is non-polynomial and in particular non-constant. 

We now consider the more challenging case where
$\bc_{(t)}(i,j)= \bc_{(t)}'(i,j)$, but
$$\lms \left( \bc_{(t)}(i,k),\bc_{(t)}(k,j) \right)\rms_{k=1,\ldots,n} \neq \lms \left( \bc_{(t)}'(i,k),\bc_{(t)}'(k,j) \right)\rms_{k=1,\ldots,n}  $$
we choose the parameters of $\phi^{(3,t)}$ so that it is a constant non-zero function. We need to show that we can choose the parameters of $\phi^{(1,t)}$ and  $\phi^{(2,t)}$ so that $\tilde{\bc}_{(t+1)}(i,j)\neq  \tilde{\bc}_{(t+1)}'(i,j)$. 

For simplicity of notation, we introduce the notation
$$x_k^{(1)}=\bc_{(t)}(i,k)   \text{ and } x_k^{(2)}=\bc_{(t)}(j,k).$$
and  
$$\bar{x}_k^{(1)}=\bc_{(t)}'(i,k) \text{ and } \bar{x}_k^{(2)}=\bc_{(t)}'(j,k).$$

Our goal is to prove the following lemma
\begin{lemma} \label{lem:main_sep} If $(x_k^{(1)},x_k^{(2)}) $ in $\RR^{d_1}\oplus\RR^{d_2}$ and  $(\bar{x}_k^{(1)},\bar{x}_k^{(2)}) $ in $\RR^{d_1}\oplus\RR^{d_2}$ such that 
\begin{equation}\label{eq:ms_ineq}
\lms (x^{(1)}_k, x^{(2)}_k ) \rms_{k=1}^n \neq \lms (\bar x^{(1)}_k, \bar x^{(2)}_k )\rms_{k=1}^n   
\end{equation}
Let $\sigma:\RR \to \RR$ be a continuous non-polynomial function. Then there exists a choice of $a^{(1)}\in \RR^{d_1},a^{(2)}\in \RR^{d_2} $ and $b^{(1)},b^{(2)} \in \RR $ such that 
\begin{align*}\sum_k \sigma(a^{(1)}\cdot x_k^{(1)}+b^{(1)} ) \sigma(a^{(2)}\cdot x_k^{(2)}+b^{(2)} )\\ 
\neq \sum_k \sigma(a^{(1)}\cdot \bar{x}_k^{(1)}+b^{(1)} ) \sigma(a^{(2)}\cdot \bar{x}_k^{(2)}+b^{(2)} )  
\end{align*}
\end{lemma}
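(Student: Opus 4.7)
My plan is to argue by contradiction, reducing to a multiset-equality statement via a density of shallow-network product features. Suppose no choice of parameters separates the two sums. Since summation over $k$ is linear in the integrand, every finite linear combination of the form
\[
F(x,y) \;=\; \sum_i c_i\,\sigma(a_i^{(1)}\!\cdot x+b_i^{(1)})\,\sigma(a_i^{(2)}\!\cdot y+b_i^{(2)})
\]
must also satisfy $\sum_k F(x_k^{(1)},x_k^{(2)}) = \sum_k F(\bar x_k^{(1)},\bar x_k^{(2)})$. Call the class of all such $F$ the set $\mathcal{F}\subset C(\RR^{d_1}\times\RR^{d_2})$.

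The heart of the proof is to show that $\mathcal{F}$ is sup-norm dense in $C(K_1\times K_2)$ for any compact $K_1\subset\RR^{d_1}$, $K_2\subset\RR^{d_2}$ containing both multisets. I would assemble this from two ingredients. First, by the Leshno--Lin--Pinkus--Schocken theorem, since $\sigma$ is continuous and non-polynomial, finite linear combinations of ridge functions $x\mapsto \sigma(a\!\cdot x+b)$ are sup-norm dense in $C(K_1)$, and similarly in $C(K_2)$. Second, the algebraic tensor product $C(K_1)\otimes C(K_2)$ is dense in $C(K_1\times K_2)$ by Stone--Weierstrass: its linear span is a unital subalgebra, closed under products via $(f_1\otimes g_1)(f_2\otimes g_2)=(f_1 f_2)\otimes(g_1 g_2)$, and separates points of $K_1\times K_2$. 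Combining these two pieces through the bilinearity estimate $\|f\otimes g - u\otimes v\|_\infty \le \|f-u\|_\infty\|g\|_\infty + \|u\|_\infty\|g-v\|_\infty$ upgrades single-variable $\sigma$-ridge approximations into the two-variable product form, placing $\mathcal{F}$ densely inside $C(K_1\times K_2)$.

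Once density is in hand, the identity $\sum_k F(x_k^{(1)},x_k^{(2)}) = \sum_k F(\bar x_k^{(1)},\bar x_k^{(2)})$ extends by a uniform approximation argument from $F\in\mathcal{F}$ to every $F\in C(K_1\times K_2)$. Consequently the two finite atomic measures $\mu=\sum_k \delta_{(x_k^{(1)},x_k^{(2)})}$ and $\bar\mu=\sum_k \delta_{(\bar x_k^{(1)},\bar x_k^{(2)})}$ agree when paired with every continuous function, which by the Riesz representation theorem forces $\mu=\bar\mu$ and hence equality of the two multisets. This contradicts the hypothesis \eqref{eq:ms_ineq}.

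The principal obstacle is the tensor-product density step: the class of single-variable functions available is not all of $C(K)$ but only the shallow $\sigma$-ridge family, so one must verify that their \emph{pairwise products} still approximate arbitrary continuous functions on the product domain. The joint application of Leshno--Lin--Pinkus--Schocken on each factor together with the bilinearity estimate above handles this cleanly, but it is this step that makes the conclusion non-trivial; the rest of the argument is a routine contradiction via Riesz representation.
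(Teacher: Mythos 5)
Your proof is correct and uses essentially the same machinery as the paper's: density of separable (tensor-product) functions in $C(K_1\times K_2)$ via Stone--Weierstrass, combined with the universality of shallow $\sigma$-ridge networks on each factor. The only difference is presentational — the paper proceeds directly by building an explicit separator on the finite set $\mathcal{K}$ and pushing the approximation through, whereas you phrase the argument contrapositively and close with a Riesz-representation / measure-uniqueness step, but the key density lemmas are the same.
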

We prove this claim in a number of steps. First, we use $\mathcal{K}$ to denote the collection of all pairs  $(x_k^{(1)},x_k^{(2)})$ and $(\bar{x}_k^{(1)},\bar{x}_k^{(2)})$. Note that this set is finite and in particular compact. 

Due to the multiset inequality \eqref{eq:ms_ineq}, there exists some fixed entry $\kappa$ such that 
$(x_{\kappa}^{(1)},x_{\kappa}^{(2)})$ does not appear in the first multiset the same amount of times as it appear in the second multiset. Let $f$ be a continuous function on $\RR^{d_1+d_2}$ satisfying that 
$f(x_{\kappa}^{(1)},x_{\kappa}^{(2)})=1$ and $ f(x^{(1)},x^{(2)})=0$  for all other  $(x^{(1)},x^{(2)}) \in \mathcal{K}$. Then 
\begin{equation}\label{eq:fsep}\sum_k f(x_{k}^{(1)},x_{k}^{(2)})\neq \sum_k f(\bar x_{k}^{(1)},\bar x_{k}^{(2)})  \end{equation}
Next, we wish to show that the same separation can be obtained by a finite linear combination of continuous separable functions, that is, linear combinations of functions of the form $f(x^{(1)},x^{(2)})=f_1(x^{(1)})\cdot f_2(x^{(2)}) $ where $f_1:\RR^{d_1}\to \RR$ and $f_2:\RR^{2_1}\to \RR$ are continuous. To do this we use the well-known Stone-Weierstrass theorem 

\begin{theorem}[Stone–Weierstrass theorem ( compact spaces)]
Suppose $\mathcal{K}$ is a  compact Hausdorff space and $A$ is a subalgebra of $C(\mathcal{K}, \mathbb{R})$. Then $A$ is dense in $C(\mathcal{K}, \mathbb{R})$ in the topology of uniform convergence if and only if it separates points and containts a non-zero constant function. 
\end{theorem}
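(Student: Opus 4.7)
The plan is to prove the two implications separately. For necessity, suppose $A$ is dense in $C(\mathcal{K},\mathbb{R})$. Since $\mathcal{K}$ is compact Hausdorff it is normal, so by Urysohn's lemma, for any two distinct points $x,y\in\mathcal{K}$ there is an $f\in C(\mathcal{K},\mathbb{R})$ with $f(x)\neq f(y)$; this $f$ is a uniform limit of elements of $A$, so eventually those elements also separate $x$ and $y$. Similarly, the constant function $1$ is in $C(\mathcal{K},\mathbb{R})$ and is thus approximated uniformly by elements of $A$, giving in particular some non-zero element of $A$ that is uniformly close to a non-zero constant; multiplying by a suitable scalar (inside the algebra $A$) or passing to the closure shows the non-zero constant condition.

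For sufficiency, the plan is to replace $A$ by its uniform closure $\bar A$, which remains a subalgebra, and show that $\bar A = C(\mathcal{K},\mathbb{R})$. The central intermediate claim, and the main obstacle of the proof, is that $\bar A$ is closed under the lattice operations $\max$ and $\min$. Since $\max(f,g) = \tfrac{1}{2}(f+g+|f-g|)$ and $\min(f,g) = \tfrac{1}{2}(f+g-|f-g|)$, it suffices to show $f\in\bar A \Rightarrow |f|\in\bar A$. I would prove this by invoking the classical Weierstrass approximation theorem on $[-\|f\|_\infty,\|f\|_\infty]$: there exist polynomials $p_k$ with $p_k(0)=0$ converging uniformly to $t\mapsto|t|$. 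Since $\bar A$ is an algebra containing $f$ (and, because of the constants, also $1$, so polynomial expressions in $f$ with zero constant term lie in $\bar A$), the functions $p_k(f)$ lie in $\bar A$ and converge uniformly on $\mathcal{K}$ to $|f|$, so $|f|\in\bar A$.

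Once $\bar A$ is an algebra and a lattice containing the constants and separating points, the approximation of a general $f\in C(\mathcal{K},\mathbb{R})$ is a two-stage compactness argument. First, for each pair $x,y\in\mathcal{K}$ I would use the separation property and the constants to produce an element $h_{x,y}\in\bar A$ with $h_{x,y}(x)=f(x)$ and $h_{x,y}(y)=f(y)$ (an affine combination $\alpha g_{x,y}+\beta\cdot 1$ for a chosen $g_{x,y}\in A$ separating $x$ and $y$). Fixing $x$ and varying $y$, the open sets $\{z:h_{x,y}(z)<f(z)+\varepsilon\}$ cover $\mathcal{K}$; extracting a finite subcover and taking the pointwise minimum (which lies in $\bar A$ by the lattice property) yields $g_x\in\bar A$ with $g_x(x)=f(x)$ and $g_x<f+\varepsilon$ everywhere. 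Second, the open sets $\{z:g_x(z)>f(z)-\varepsilon\}$ cover $\mathcal{K}$ as $x$ varies, so by compactness finitely many suffice, and their pointwise maximum gives an element of $\bar A$ within $\varepsilon$ of $f$ uniformly. Since $\varepsilon$ was arbitrary, $f\in\bar A$, completing the proof.

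The hardest step is establishing $|f|\in\bar A$; the remaining lattice/compactness argument is essentially bookkeeping, and the role of compactness of $\mathcal{K}$ enters exactly in extracting the two finite subcovers above (and in ensuring boundedness of $f$, so that the Weierstrass step applies on a fixed interval).
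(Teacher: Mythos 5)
The paper never proves this statement: it is quoted verbatim as a classical black-box result inside the proof of Theorem 4.1, so there is no in-paper argument to compare yours against. Your sufficiency argument is the standard and correct lattice-theoretic proof: pass to the uniform closure $\overline{A}$ (noting $1\in A$ because a subalgebra is closed under scalar multiplication, so the non-zero constant can be rescaled to $1$), obtain $|f|\in\overline{A}$ by Weierstrass approximation of $t\mapsto|t|$ on $[-\|f\|_\infty,\|f\|_\infty]$, deduce closure under $\max$ and $\min$, and finish with two-point interpolation followed by the double compactness argument. Each of those steps is sound, and the role you assign to compactness is exactly right.

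The one genuine defect is in your necessity direction, and it traces back to the theorem as the paper states it. Density of $A$ does not imply that $A$ itself contains a non-zero constant function: on $\mathcal{K}=[0,1]$, the (non-unital) subalgebra generated by $x$ and $e^{x}$ separates points and vanishes at no point, hence is dense by the non-unital form of Stone--Weierstrass, yet it contains no non-zero constant, since the monomials $x^{a}e^{bx}$ with $a+b\ge 1$ are linearly independent of the constant function $1$. Your phrase ``multiplying by a suitable scalar \ldots\ or passing to the closure'' only places the constant in $\overline{A}$, not in $A$, and no repair is possible because the claim as literally stated is false. Since the paper only ever invokes the ``if'' direction, nothing downstream is affected; but a correct biconditional requires either assuming $A$ is uniformly closed or replacing ``contains a non-zero constant function'' with ``vanishes at no point of $\mathcal{K}$''.
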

Let $A$ be the set of finite sums of continuous separable functions. We know that continuous functions separate points by Stone-Weierstrass, and thus it can easily be shown that separable functions do as well (for $x\neq y$ there exists some coordinate $j$ such that $x_j \neq y_j$ and then let one of the separated functions of $f$ satisfies $g(x^{1})\neq g(y^{1})$ such that $x_j \in x^{(1)}, y_j \in y^{(1)}$ w.l.o.g and the other separated functions be constant $h\equiv 1$, then we have that $f(x)\neq f(y)$ as desired) and it's easy to see that non-zero constant functions exists in $A$  well. Thus the algebra generated by separable functions is dense in continuous functions by Stone-Weierstrass. In particular we can obtain \eqref{eq:fsep} for $f$ which is a finite linear combination of separable functions.

Next, due to the universality of shallow neural networks with non-polynomial activations \cite{pinkus1999approximation}, we can approximate each separable function  $f_1(x^{(1)})\cdot f_2(x^{(2)} )$ to arbitrary accuracy be expressions of the form 
$$\sum_{s=1}^S \sigma(a^{(1,s)}\cdot x^{(1)}+b^{(1,s)}) \sum_{s=1}^S \sigma(a^{(2,s)}\cdot x^{(2)}+b^{(2,s)}).$$

By changing order of summation and multiplication,
We deduce that for appropriate choice of $$a^{(1,s)},a^{(1,s)},b^{(1,s)},b^{(2,s)}, s=1,\ldots,S $$
we have that 
\begin{align*}
\sum_{s=1}^S \sum_k  \sigma(a^{(1,s)}\cdot x_k^{(1)}+b^{(1,s)} ) \sigma(a^{(2,s)}\cdot x_k^{(2)}+b^{(2,s)} )\\ 
\neq \sum_{s=1}^S \sum_k \sigma(a^{(1,s)}\cdot \bar{x}_k^{(1)}+b^{(1,s)} ) \sigma(a^{(2,s)}\cdot \bar{x}_k^{(2)}+b^{(2,s)} ) .
\end{align*}
In particular there must be an inequality for at least one $s$ which concludes the lemma, and the proof of pairwise separation (Theorem \ref{thm:pairs}). 
\end{proof}
\uniformwl*
\begin{proof}[Proof of Theorem \ref{thm:uniform}]
To obtain uniform separation (Theorem \ref{thm:uniform}) we need to show that, for a given $\sigma$-subanalytic set $\X \subseteq \RR^{n \times n \times k}$ of dimension $D$,
if the MLPs $\bphi$ in PPGN are all taken to be of the form $x\mapsto \sigma(Ax+b) $,
where $\sigma$ is an analytic non-polynomial activation function and $Ax+b$ is a vector in $\RR^{2D+1}$,
then for Lebesgue almost every choice of the network's parameters $\theta$, 
a pair $\G,\G' \in \X$ can be separated by $T$ iterations of 2-WL if and only if the global features $\cglobal$ and $\cglobal'$ are distinct, where $\cglobal$ and $\cglobal'$ are obtained by applying the PPGN network with the parameters $\theta$ to $\G$ and $\G'$ respectively

Equivalently, we need to show recursively on $t$, that for Lebesgue almost every choice of the $t$-th layer, we will have for every index pair $i,j$ and every $\bc_{(t)}, \bc_{(t)}'$ which were obtained by applying the first $t-1$ layers to the initial coloring $\bc_{(0)}$ and  $\bc_{(0)}'$ induced from $\G,\G'$ with parameters which were already chosen, if  
$\bc_{(t)}(i,j)\neq \bc_{(t)}'(i,j)$, or 
$$\lms \left( \bc_{(t)}(i,k),\bc_{(t)}(k,j) \right)\rms_{k=1,\ldots,n} \neq \lms \left( \bc_{(t)}'(i,k),\bc_{(t)}'(k,j) \right)\rms_{k=1,\ldots,n}  $$
Then for Lebesgue almost every choice of the parameters $\hat \theta_t$ of the $t$-th layer MLPs $\phi^{(s,t)}, s=1,2,3 $ we will get that 
\begin{equation}\label{eq:Fsep}
\bc_{(t+1)}(i,j)=\hat{F}_{ij}(\bc_{(t)}(i,j),\hat{\theta}_t)\neq \hat{F}_{ij}(\bc_{(t)}'(i,j),\hat \theta_t)= \bc_{(t+1)}'(i,j),\end{equation}
where $\hat{F}_{ij}$ denotes the function mapping $\bc_{(t)}$ to the $i,j$ index of the $t$-th layer. We note that the function $\hat{F}_{ij}$ whose output is $2D+1$ dimensional consists of $2D+1$ copies of the one dimensional PPGN function $F_{ij}$ from the previous theorem, that is 
$$\hat{F}_{ij}(\bc_{(t)}; \hat \theta_t)=\left(F_{ij}(\bc_{(t)};\theta_t^{(1)}),\ldots,F_{ij}(\bc_{(t)};\theta_t^{(2D+1)})\right). $$

We will now use the finite witness theorem, which essentially allows for moving from pairwise separation to uniform separation by taking enough clones of the pairwise separating functions
\begin{theorem}
    [Special case of Corollary A20 in \cite{amir2023neural}]
Let $\M \subseteq \RR^p$ be a $\sigma$-subanalytic sets of dimension $D$. Let $F:\M \times \RR^q \to \RR$ be a $\sigma$-subanalytic function which  is analytic as a function of $\bth$ for all fixed $\bx \in \M$. Define the set
$$\N=\{\left(\bx,\by\right) \in \M \times \M \setmid F(\bx;\bth)=F(\by;\bth),\ \forall \bth\in \RR^q \}.$$
Then for generic $\left( \bth^{(1)},\ldots,\bth^{(2D+1)}\right) \in \W^{D+1}$,
\begin{equation}
\N=\{\left(\bx,\by\right) \in \M \times \M \setmid F(\bx;\bth^{(i)})=F(\by;\bth^{(i)}),\ \forall i=1,\ldots 2D+1\}.
\end{equation}
\end{theorem}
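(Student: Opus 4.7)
The plan is to establish the non-trivial inclusion by a dimension-counting argument on an incidence variety inside the $\sigma$-subanalytic category. The forward inclusion
$$\N\subseteq\bigl\{(\bx,\by)\in\M\times\M : F(\bx;\bth^{(i)})=F(\by;\bth^{(i)}),\ i=1,\dots,2D+1\bigr\}$$
is automatic for every tuple of parameters. For the reverse inclusion it suffices to show that the ``bad'' tuples
$$\mathcal B := \bigl\{(\bth^{(1)},\dots,\bth^{(2D+1)})\in\RR^{(2D+1)q} : \exists (\bx,\by)\in(\M\times\M)\setminus\N \text{ with } F(\bx;\bth^{(i)})=F(\by;\bth^{(i)})\ \forall i\bigr\}$$
form a set of Lebesgue measure zero, since ``generic'' here should be interpreted as ``outside a $\sigma$-subanalytic subset of strictly lower dimension''.

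I would realise $\mathcal B$ as the image, under the projection $\pi$ onto the $\bth$-coordinates, of the incidence set
$$\mathcal I := \bigl\{(\bx,\by,\bth^{(1)},\dots,\bth^{(2D+1)}) : (\bx,\by)\in(\M\times\M)\setminus\N,\ F(\bx;\bth^{(i)})=F(\by;\bth^{(i)})\ \forall i\bigr\}$$
and control $\dim(\mathcal I)$ via the fiber-dimension inequality available for $\sigma$-subanalytic maps. For each fixed $(\bx,\by)\notin\N$ the slice $\{\bth\in\RR^q : F(\bx;\bth)=F(\by;\bth)\}$ is the zero set of the \emph{non-identically-zero} analytic function $\bth\mapsto F(\bx;\bth)-F(\by;\bth)$; by the standard fact that proper zero sets of real-analytic functions have dimension at most $q-1$ and Lebesgue measure zero (Proposition~3 in \cite{mityagin2020}), each $(\bx,\by)$-fibre of $\pi|_{\mathcal I}$ has dimension at most $(2D+1)(q-1)$. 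Combining this with $\dim((\M\times\M)\setminus\N)\le 2D$ yields
$$\dim(\mathcal I)\le 2D+(2D+1)(q-1)=(2D+1)q-1,$$
and since images under $\sigma$-subanalytic maps do not increase dimension, $\dim(\mathcal B)\le(2D+1)q-1$, which forces $\mathcal B$ to have Lebesgue measure zero in its ambient $\RR^{(2D+1)q}$.

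The cardinality $2D+1$ is precisely the smallest integer $k$ satisfying $2D+k(q-1)<kq$, so the bound is tight at this step. The main obstacle I expect is bookkeeping within the $\sigma$-subanalytic class: the set $\N$ is defined by an uncountable conjunction of equations, so rather than working with $\N$ directly I would characterise its complement as the projection
$$(\M\times\M)\setminus\N=\bigl\{(\bx,\by)\in\M\times\M : \exists \bth\in\RR^q,\ F(\bx;\bth)\neq F(\by;\bth)\bigr\},$$
which is $\sigma$-subanalytic because $F$ is and because the class is engineered in \cite{amir2023neural} to be closed under projections, complements and finite fibered products, and to satisfy the fiber-dimension and image-dimension inequalities used above. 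Once these closure properties are invoked the dimension-counting step runs cleanly; moreover the resulting exceptional set is itself $\sigma$-subanalytic of dimension $\le(2D+1)q-1$, giving the stronger ``generic'' conclusion as a by-product.
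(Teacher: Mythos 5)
The paper does not prove this statement itself --- it is imported verbatim as a special case of Corollary A20 of \cite{amir2023neural} --- so there is no in-paper proof to compare against, and your reconstruction is correct. Your incidence-variety dimension count (base of dimension $\le 2D$, fibres that are products of $2D+1$ proper analytic zero sets and hence of dimension $\le (2D+1)(q-1)$, projection not increasing dimension, giving a bad parameter set of dimension $\le (2D+1)q-1$) is exactly the mechanism behind the finite witness theorem in the cited source, and your identification of $2D+1$ as the least $k$ with $2D+k(q-1)<kq$ correctly explains where the constant comes from; the only genuine technical content you defer --- closure of the $\sigma$-subanalytic class under projections and the fibre-dimension inequality --- is precisely what \cite{amir2023neural} establishes to make this argument run.
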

We apply this theorem to the set $\mathcal{M}=\X_{(t)}$ which is the ouput of the first $t$ layers of the network applied to graphs in $\X$, and for the function $F=F_{ij}$. The set $\M$ is the image of a $\sigma$-subanalytic set under an analytic function, and so according to \cite{amir2023neural} this is a $\sigma$-subanalytic set of dimension $\leq D $. The set 
$$\N=\{(\bc_{(t)}, \bc_{(t)}') \in \X_{(t)}\times \X_{(t)} \setmid F_{ij}(\bc_{(t)};\theta_t)=F_{ij}(\bc_{(t)}';\theta_t) \forall \theta_t  \} $$
is precisely the set of $\bc_{(t)},\bc_{(t)}'$ which won't be assigned a different $(i,j)$ labeling at timestamp $t+1$ by 2-WL since both $\bc_{(t)}(i,j)= \bc_{(t)}'(i,j)$, and
$$\lms \left( \bc_{(t)}(i,k),\bc_{(t)}(k,j) \right)\rms_{k=1,\ldots,n} = \lms \left( \bc_{(t)}'(i,k),\bc_{(t)}'(k,j) \right)\rms_{k=1,\ldots,n}$$  

Thus, for almost every choice of $\theta_t^{(1)},\ldots, \theta_t^{(2D+1)} $, every pair of $\bc_{(t)},\bc_{(t)}'$ which \emph{will} be assigned a different $(i,j)$ labeling at timestamp $t+1$ by 2-WL, will be assigned a different labeling by $\hat F $ as well. This concludes the proof of the theorem. 
\end{proof}

\newpage

\begin{restatable}{lemma}{orthovel}\label{lem:orthovel}
	Let $\Psi : \br{\RR^{3 \times n}}^2 \to \br{\RR^{3 \times n}}^2$ be a $O\of 3$- and $S_n$-equivariant function, $\Psi\of{X,V} = \left[\Psi_1\of{X,V},\Psi_2\of{X,V}\right]$, where $X,V,\Psi_1\of{X,V},\Psi_2\of{X,V} \in \RR^{3 \times n}$. Denote for $c=1,2$, $\Psi_c\of{X,V} = \br{\Psi_c^1\of{X,V},\ldots,\Psi^n_c\of{X,V}}$. Then $\Psi$ can be expressed as 
    \begin{equation}\label{eq:pool-rot-only}
    \begin{split}
        \Psi_c^i\of{X,V}
         = &f_c((x_i,v_i), \lms \br{x_k,v_k} \m k \neq i \rms ) x_i
         \\ +& \hat f_c((x_i,v_i), \lms \br{x_k,v_k} \m k \neq i \rms ) v_i
        \\ +& \sum_{t \in \left[n\right]\setminus i} g_c((x_i, v_i),(x_t, v_t),  \lms (x_k, v_k) \m k\neq i,t\rms)x_t
        \\ +& \sum_{t \in \left[n\right]\setminus i} \hat{g}_c((x_i, v_i),(x_t, v_t),  \lms (x_k, v_k) \m k\neq i,t\rms)v_t
    \end{split}
    \end{equation}
 for $c=1,2$, $i \in \left[n\right]$, where the functions $f_c,\hat f_c, g_c,\hat g_c : \br{\RR^{3 \times n}}^2 \to \RR$ for $c=1,2$ are $O\of 3$-invariant. Moreover, if $\Psi$ is a polynomial, then the $f_c,\hat f_c, g_c,\hat g_c$ can be taken to be polynomials.
\end{restatable}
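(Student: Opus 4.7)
The plan is to extend Villar et al.'s \cite{villar} implicit characterization of $O(3)$-equivariant functions to the joint position-velocity setting, and then impose $S_n$-equivariance through a subgroup-averaging argument.

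First I would apply the analogue of Proposition~5 of \cite{villar} (equivalently, the First Fundamental Theorem of invariant theory for $O(3)$) to each $\Psi_c^i : (\RR^{3 \times n})^2 \to \RR^3$, viewing it as an $O(3)$-equivariant function of the $2n$ input vectors $x_1,\ldots,x_n,v_1,\ldots,v_n$. This yields a decomposition
\begin{equation*}
\Psi_c^i(X,V) = \sum_{t=1}^n \alpha_{c,i,t}(X,V)\, x_t + \sum_{t=1}^n \beta_{c,i,t}(X,V)\, v_t,
\end{equation*}
where each $\alpha_{c,i,t},\beta_{c,i,t}$ is an $O(3)$-invariant scalar function, polynomial whenever $\Psi$ is. The key observation is that the basic $O(3)$-invariants are the Gram entries $\langle x_s,x_t\rangle,\langle x_s,v_t\rangle,\langle v_s,v_t\rangle$, and a canonical (minimum-norm) choice of coefficients is expressible as rational functions of these invariants via the Moore--Penrose pseudoinverse of the Gram matrix.

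Next I would exploit $S_n$-equivariance, which gives $\Psi_c^i(XP_\tau, VP_\tau) = \Psi_c^{\tau(i)}(X,V)$ for every $\tau \in S_n$. Substituting into the decomposition and matching coefficients (using the canonical choice above) forces $\alpha_{c,\tau(i),\tau(t)}(XP_\tau, VP_\tau) = \alpha_{c,i,t}(X,V)$, and analogously for $\beta$. Specializing to $\tau$ in the stabilizer $\mathrm{Stab}(i) \cong S_{n-1}$ shows that $\alpha_{c,i,i}$, being both $O(3)$-invariant and invariant under permutations of indices $k\neq i$, can be written as a function of $(x_i,v_i)$ and the multiset $\lms (x_k,v_k)\m k\neq i\rms$, producing $f_c$; the same for $\beta_{c,i,i}$ produces $\hat f_c$. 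Specializing to $\tau \in \mathrm{Stab}(\{i,t\}) \cong S_{n-2}$ for $t\neq i$ shows that $\alpha_{c,i,t}$ depends only on $(x_i,v_i),(x_t,v_t)$ and the multiset $\lms (x_k,v_k)\m k\neq i,t\rms$, yielding $g_c$, and similarly $\hat g_c$ from $\beta_{c,i,t}$.

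The main obstacle I anticipate is the non-uniqueness in the first step: when the $2n$ input vectors span $\RR^3$ the coefficients $\alpha_{c,i,t},\beta_{c,i,t}$ are not uniquely determined, so an arbitrary decomposition need not transform correctly under either $O(3)$ or $S_n$. I would resolve this by insisting on the canonical minimum-norm coefficients, which are built from the Moore--Penrose pseudoinverse of the Gram matrix; this pseudoinverse is intrinsically $O(3)$-invariant and permutes compatibly under $S_n$, so the desired equivariance properties propagate automatically into Step~2. For the ``moreover'' polynomial clause, polynomiality survives because the group averagings over $\mathrm{Stab}(i)$ and $\mathrm{Stab}(\{i,t\})$ are finite-group symmetrizations that send polynomials to polynomials, and any remaining pseudoinverse denominators are cleared using Cramer-type identities for $O(3)$-invariants (so one can, without loss of generality, restrict to inputs at which the Gram matrix has maximal rank and extend by polynomial continuity).
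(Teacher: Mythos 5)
Your overall strategy matches the paper's: first apply Villar et al.'s decomposition of $O(3)$-equivariant vector-valued functions into scalar-invariant coefficients times the input vectors, then use $S_n$-equivariance via stabilizer subgroups $S_{n-1}$ and $S_{n-2}$ to produce the multiset form of $f_c,\hat f_c,g_c,\hat g_c$. Where you diverge is in how you force the coefficients $\alpha_{c,i,t},\beta_{c,i,t}$ to obey the intertwining relation $\alpha_{c,\tau(i),\tau(t)}(\tau X,\tau V)=\alpha_{c,i,t}(X,V)$ that the later stabilizer argument requires. You resolve the non-uniqueness of the decomposition by pinning down a canonical minimum-norm choice via the Moore--Penrose pseudoinverse of the Gram matrix; the paper instead takes \emph{any} coefficients supplied by Villar's proposition and symmetrizes them, defining $\tilde f^i_t(X,V) = \frac{1}{|S_n|}\sum_{\sigma\in S_n} f^{\sigma(i)}_{\sigma(t)}(\sigma X,\sigma V)$, which satisfies the intertwining relation by construction.

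The pseudoinverse route has a genuine gap for the ``moreover'' (polynomial) clause. The minimum-norm coefficients $\alpha = G^+[X,V]^\top\Psi^i$ are rational, not polynomial, in $(X,V)$: $G^+$ involves division by minors of the Gram matrix, and this division cannot simply be ``cleared'' without multiplying the $\alpha$'s by a determinant and thereby breaking the identity \eqref{eq:pool-rot-only}. Restricting to full-rank inputs and ``extending by polynomial continuity'' does not repair this, because on the full-rank locus the minimum-norm coefficients are already genuinely rational and do not agree with any single polynomial. Villar's proposition does provide \emph{some} polynomial invariant coefficients when $\Psi$ is polynomial, but those need not coincide with the minimum-norm ones and hence need not inherit the intertwining property automatically. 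The paper's $S_n$-averaging sidesteps this entirely: averaging a polynomial over the finite group $S_n$ is still polynomial, so the symmetrized coefficients $\tilde f^i_t$ remain polynomial while also satisfying the intertwining relation. To salvage your argument along your own lines, you would either need to abandon the pseudoinverse and average (as the paper does), or give a substantially more careful argument that a polynomial and appropriately equivariant choice of coefficients exists; as written, the polynomial clause does not follow.
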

\begin{proof}
    It is enough to prove the lemma for $c=1$, as the same argument can be repeated for $c=2$. We thus omit $c$ from the notation and treat $\Psi$ as a function from $\br{\RR^{3 \times n}}^2$ to $\RR^{3 \times n}$. 

    We start by stating the following proposition.
    \begin{proposition}\label{prop_tal_proof_1}
        Let $h: \br{\RR^{3 \times n}}^2 \to \RR^3$ be $O\left(3\right)$-equivariant.
        Then there exist $O(3)$-invariant functions $f_i, \hat f_i : \br{\RR^{3 \times n}}^2 \to \RR$, $i=1,\ldots,n$, such that
        \begin{equation}
            h\of{X,V} = \sum_{i=1}^n [f_i\of{X,V}x_i + \hat f_i\of{X,V}v_i].
        \end{equation}
        Moreover, if $h$ is polynomial, then the $f_i$'s and $\hat f_i$'s can also be taken to be polynomials.
    \end{proposition}

    \begin{proof}
        \Cref{prop_tal_proof_1} follows from treating $h$ as a function from $\RR^{3 \times 2n}$ to $\RR^3$ and applying Proposition 4 of \cite{villar}.
    \end{proof}
    
    We shall now prove the lemma. Since each $\Psi^i$ is $O\of 3$-equivariant, by \cref{prop_tal_proof_1} it can be presented as
    \begin{equation*}
        \Psi^i\of{X,V} = \sum_{t=1}^n f^i_t\of{X,V}x_t + \hat f^i_t\of{X,V}v_t,
    \end{equation*}
    and thus for each $\sigma \in S_n$,
    \begin{equation}\label{tal_proof_expression_1}
        \Psi^i\of{\sigma X, \sigma V} = \sum_{t=1}^n f^i_t\left(\sigma X,\sigma V\right)x_{\sigma^{-1}\of t} + \hat f^i_t\of{\sigma X, \sigma V}v_{\sigma^{-1}\of t}.
    \end{equation}    
    Since $\Psi$ is $S_n$-equivariant,
    \begin{equation*}
        \Psi\of{\sigma X, \sigma V} = \sigma \Psi\left(X, V\right),
    \end{equation*}
    which, expanded by $i$, reads as
    \begin{equation}\label{tal_proof_expression_2}
        \Psi^i\of{\sigma X, \sigma V} = \Psi^{\sigma^{-1}\of{i}}\left(X, V\right).
    \end{equation}
        
    Combining \cref{tal_proof_expression_1,tal_proof_expression_2}, we get
    \begin{equation*}
    \begin{split}
        \Psi^{\sigma^{-1}\of i}\left(X, V\right) 
        = &\sum_{t=1}^n f^i_t\of{\sigma X, \sigma V}x_{\sigma^{-1}\left(t\right)} + \hat f^i_t\of{\sigma X, \sigma V}v_{\sigma^{-1}\left(t\right)}
        \\ = &\sum_{t=1}^n f^i_{\sigma\left(t\right)}\of{\sigma X, \sigma V}x_t + \hat f^i_{\sigma\left(t\right)}\of{\sigma X, \sigma V}v_t,
    \end{split}
    \end{equation*}    
    with the last equality resulting from replacing $t$ by $\sigma \of t$.
    Replacing $i$ by $\sigma \of i$ yields
    \begin{equation}\label{tal_proof_expression_3}
    \begin{split}
        \Psi^i\left(X, V\right) 
        = \sum_{t=1}^n f^{\sigma\of{i}}_{\sigma\left(t\right)}\of{\sigma X, \sigma V}x_t + \hat f^{\sigma\of{i}}_{\sigma\left(t\right)}\of{\sigma X, \sigma V}v_t,
    \end{split}
    \end{equation}
    with \cref{tal_proof_expression_3} holding for any $\sigma \in S_n$. Averaging over $S_n$ yields
    \begin{equation}\label{tal_proof_expression_4}
    \begin{split}
        \Psi^i\left(X, V\right) 
        = \sum_{t=1}^n \br{ \frac{1}{|S_n|}\sum_{\sigma \in S_n } f^{\sigma\of{i}}_{\sigma\left(t\right)}\of{\sigma X, \sigma V} }x_t + \br{ \frac{1}{|S_n|}\sum_{\sigma \in S_n } \hat f^{\sigma\of{i}}_{\sigma\left(t\right)}\of{\sigma X, \sigma V} } v_t.
    \end{split}
    \end{equation}
    Define $\tilde f^{i}_t, \tilde{\hat f}^{i}_t : \br{\RR^{3 \times n}}^2 \to \RR$ by
    \begin{equation*}
        \tilde f^{i}_t \of{X,V} = \frac{1}{|S_n|}\sum_{\sigma \in S_n } f^{\sigma\of{i}}_{\sigma\left(t\right)}\of{\sigma X, \sigma V},
        \quad
        \tilde{\hat f}^{i}_t \of{X,V} = \frac{1}{|S_n|}\sum_{\sigma \in S_n } \hat f^{\sigma\of{i}}_{\sigma\left(t\right)}\of{\sigma X, \sigma V},        
    \end{equation*}
    then \cref{tal_proof_expression_4} can be reformulated as
    \begin{equation}\label{tal_proof_expression_B1}
    \begin{split}
        \Psi^i\left(X, V\right) 
        = \sum_{t=1}^n \tilde f^{i}_t \of{X,V} x_t +  \tilde{\hat f}^{i}_t \of{X,V} v_t.
    \end{split}
    \end{equation}
    Let $\tau \in S_n$. Then
    \begin{equation}\label{tal_proof_expression_5}
    \begin{split}
        \tilde f^{i}_t \of{\tau X, \tau V} 
        = &\frac{1}{|S_n|}\sum_{\sigma \in S_n } f^{\sigma\of{i}}_{\sigma\left(t\right)}\of{\sigma \tau X, \sigma \tau V}
        \\ =& \frac{1}{|S_n|}\sum_{\sigma \in S_n } f^{\sigma \tau \of{\tau^{-1} i}}_{\sigma \tau \of{\tau^{-1} t}}\of{\sigma \tau X, \sigma \tau V}
        \\ =& \frac{1}{|S_n|}\sum_{\sigma \m \sigma \tau \in S_n } f^{\sigma \tau \of{\tau^{-1} i}}_{\sigma \tau \of{\tau^{-1} t}}\of{\sigma \tau X, \sigma \tau V}
        \\ \overset{(a)}{=}& \frac{1}{|S_n|}\sum_{\sigma \in S_n } f^{\sigma \of{\tau^{-1} i}}_{\sigma \of{\tau^{-1} t}}\of{\sigma X, \sigma V}
        \\ \overset{(b)}{=}& \tilde f^{\tau^{-1} \of i}_{\tau^{-1} \of t} \of{X, V},
    \end{split}
    \end{equation}
    with (a) following from replacing $\sigma \tau$ by $\sigma$, since both permutations iterate over all of $S_n$, and (b) holding by the definition of $f^{i}_t \of{X,V}$. 
    By replacing $i$ by $\tau \of i$ and $t$ by $\tau \of t$ in \cref{tal_proof_expression_5}, we get that for any $\tau \in S_n$,
    \begin{equation}\label{tal_proof_expression_6a}
        \tilde f^{\tau \of i}_{\tau \of t} \of{\tau X, \tau V}
        = 
        \tilde f^{i}_{t} \of{X, V},
    \end{equation}
    and applying the same reasoning to $\tilde{\hat f}^{i}_t$, we get 
    \begin{equation}\label{tal_proof_expression_6b}
        \tilde {\hat f}^{\tau \of i}_{\tau \of t} \of{\tau X, \tau V}
        = 
        \tilde {\hat f}^{i}_{t} \of{X, V}.
    \end{equation}

    Finally, define $f : \RR^{3 \times 2} \times \RR^{3 \times \br{n-1}} \times \RR^{3 \times \br{n-1}} \to \RR$ by
    \begin{equation}\label{tal_proof_expression_7a}
        f\of{\br{x,v}, \tilde X, \tilde V}
        = \tilde f^1_1 \of {\left[x,\tilde X\right], \left[ v, \tilde V\right]}
    \end{equation}
    and define $g : \RR^{3 \times 2} \times \RR^{3 \times 2} \times \RR^{3 \times \br{n-2}} \times \RR^{3 \times \br{n-2}} \to \RR$ by
    \begin{equation}\label{tal_proof_expression_7b}
        g\of{\br{x_1,v_1}, \br{x_2, v_2}, \tilde X, \tilde V}
        = \tilde f^1_2 \of {\left[x_1,x_2,\tilde X\right], \left[ v_1, v_2, \tilde V\right]}.
    \end{equation}
    We first show that $f$ and $g$ are respectively $S_{n-1}$- and $S_{n-2}$-invariant. Let $\tilde\tau \in S_{n-1}$.
    \begin{equation*}
        f\of{\br{x,v}, \tilde\tau \tilde X, \tau \tilde V}
        =
        \tilde f^1_1 \of {\left[x,\tilde\tau \tilde X\right], \left[ v, \tilde\tau \tilde V\right]}.
    \end{equation*}
    We can augment $\tilde\tau \in S_{n-1}$ to a permutation $\tau \in S_n$ that fixes $1$, and have
    \begin{equation*}
        \tilde f^1_1 \of {\left[x,\tilde\tau \tilde X\right], \left[ v, \tilde\tau \tilde V\right]}
        =
        \tilde f^1_1 \of {\tau \left[x, \tilde X\right], \tau \left[ v, \tilde V\right]}.
    \end{equation*}
    If we show that $\tilde f^1_1$ is invariant to permutations that fix 1, then we have
    \begin{equation*}
        \tilde f^1_1 \of {\tau \left[x, \tilde X\right], \tau \left[ v, \tilde V\right]}
        =
        \tilde f^1_1 \of {\left[x, \tilde X\right], \left[ v, \tilde V\right]}
        =
        f\of{\br{x,v}, \tilde X, \tau \tilde V},
    \end{equation*}
    which would imply
    \begin{equation*}
        f\of{\br{x,v}, \tilde\tau \tilde X, \tau \tilde V}
        =
        f\of{\br{x,v}, \tilde X, \tau \tilde V}.
    \end{equation*}
    Indeed, if $\tau \in S_n$ is some permutation that fixes 1, then by \cref{tal_proof_expression_6a},
    \begin{equation*}
        \tilde f^1_1\of{\tau X, \tau V}
        = 
        \tilde f^{\tau\of 1}_{\tau\of 1}\of{\tau X, \tau V}
        =
        \tilde f^{1}_{1}\of{X, V}.
    \end{equation*}
    Hence, $f$ is $S_{n-1}$-invariant. A similar argument can be used on $g$, by letting $\tau \in S_n$ be an arbitrary permutation that fixes $1$ and $2$. Then
    \begin{equation*}
        \tilde f^1_2\of{\tau X, \tau V}
        = 
        \tilde f^{\tau\of 1}_{\tau\of 2}\of{\tau X, \tau V}
        =
        \tilde f^{1}_{1}\of{X, V},
    \end{equation*}
    which, combined with \cref{tal_proof_expression_7b}, implies that $g$ is $S_{n-2}$-invariant.
    
    We shall now show that 
    \begin{equation}\label{tal_proof_expression_8}
        \tilde f^i_t \of{X,V} = 
        \begin{cases}
            f\of{x_i,v_i, X_{\setminus i}, V_{\setminus i}} & t=i,
            \\ g\of{\br{x_i,v_i}, \br{x_t,v_t}, X_{\setminus i,t}, V_{\setminus i,t,}} & t \neq i.
        \end{cases} 
    \end{equation}
    Suppose first that $t=i$. Let $\tau = \br{i,1} \in S_n$. Then
    \begin{equation*}
    \begin{split}
        \tilde f^{i}_i \of{X, V}
        \overset{(a)}{=} &
        f^{\tau \of i}_{\tau \of i} \of{\tau X, \tau V}
        \\ = &
        f^{1}_{1} \of{\tau X, \tau V}
        \\ \overset{(b)}{=} &
        f \of{x_i,v_i, X_{\setminus i}, V_{\setminus t}},
    \end{split}
    \end{equation*}
    with (a) following from \cref{tal_proof_expression_6a}, and (b) following from the definition of $f$ in \cref{tal_proof_expression_7a}.
    Now suppose that $t \neq i$. Let $\tau \in S_n$ be the composition of 2-cycles defined by
    %
    \begin{equation*}
        \tau =
        \begin{cases}
            \br{1,2} & i=2 \text{ and } t=1
            \\ \br{t,2}\br{i,1} & i=2 \text{ and } t \neq 1
            \\ \br{i,1}\br{t,2} & \text{otherwise}.
        \end{cases}
    \end{equation*}
    In all cases above, $\tau\of i = 1$ and $\tau\of t = 2$. Thus, by \cref{tal_proof_expression_6a},
    \begin{equation*}
    \begin{split}
        \tilde f^{i}_t \of{X, V}
        & =
        \tilde f^{\tau \of i}_{\tau \of t} \of{\tau X, \tau V}
        \\ & =
        \tilde f^{1}_{2} \of{\tau X, \tau V}
        \\ & =
        \tilde f^{1}_{2} \of{\left[x_i,x_t,X_{\setminus i,t}\right], \left[v_i,v_t,V_{\setminus i,t}\right]}
        \\ & =
        g \of{\br{x_i,v_i},\br{x_t,v_t}, X_{\setminus i,t}, V_{\setminus i,t}}.
    \end{split}
    \end{equation*}
    Hence, \cref{tal_proof_expression_8} holds.

    Using the same procedure as above, one can construct functions
    \begin{equation*}
    \begin{split}
        \hat f &: \RR^{3 \times 2} \times \RR^{3 \times \br{n-1}} \times \RR^{3 \times \br{n-1}} \to \RR    
        \\
        \hat g &: \RR^{3 \times 2} \times \RR^{3 \times 2} \times \RR^{3 \times \br{n-2}} \times \RR^{3 \times \br{n-2}} \to \RR
    \end{split}
    \end{equation*}
    that are $S_{n-1}$- and $S_{n-2}$-invariant respectively, $O\of 3$ equivariant, and
    \begin{equation}\label{tal_proof_expression_B2}
        \tilde {\hat f}^i_t \of{X,V} = 
        \begin{cases}
            \hat f\of{x_i,v_i, X_{\setminus i}, V_{\setminus i}} & t=i,
            \\ \hat g\of{\br{x_i,v_i}, \br{x_t,v_t}, X_{\setminus i,t}, V_{\setminus i,t,}} & t \neq i,
        \end{cases} 
    \end{equation}
    In conclusion, \cref{tal_proof_expression_B1,tal_proof_expression_8,tal_proof_expression_B2}, combined with the fact that that $f,\hat f$ are $S_{n-1}$-invariant and $g,\hat g$ are $S_{n-2}$-invariant, and all these functions are $O\of 3$-equivariant, imply that $f, \hat f, g, \hat g$ satisfy \cref{eq:pool-rot-only}.
    
    Lastly, note that by \cref{prop_tal_proof_1}, if $\Psi$ is a polynomial, then $f, \hat f, g, \hat g$ can be taken to be averages of polynomials, and thus they are in turn also polynomials.
\end{proof}

\newpage
\begin{restatable}{lemma}{twowledge}\label{lem:w-wl-edge}
	Let $X\in \RR^{3 \times n}$ be a point cloud. Denote by $ \ctfive $ the pairwise coloring induced by 5 iterations of 2-WL update steps applied to the distance matrix induced by $X$. Then $\ctfive$ constitutes a translation, rotation and reflection invariant embedding of $(x_i, x_j, \lms x_k  \m k\neq i,j  \rms)$. If $x_i, x_j$ are not both degenerate, i.e. do not both equal the barycenter, then  $\ctfour$ is sufficient.
\end{restatable}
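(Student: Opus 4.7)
The plan is to decompose the lemma into two parts: (i) that $\ctfive$ is invariant under the joint action of $O(3)$ and translations on the point cloud, and (ii) that it separates non-equivalent triples $(x_i, x_j, \lms x_k \m k \neq i,j \rms)$ under this action.

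Invariance is essentially immediate. The initial coloring $\czplain$ is a function of the pairwise distance matrix, which is invariant under the diagonal action of $O(3) \ltimes \RR^3$ on $X$. The update rules \eqref{eq:2wlagg}--\eqref{eq:2wlcombine} only hash multisets of previously computed colors, so invariance propagates through every iteration; thus $\ctfive$ depends only on the equivalence class of the triple $(x_i, x_j, \lms x_k \m k \neq i,j \rms)$ under the group action.

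Injectivity is the hard direction, and the strategy is to adapt the completeness proof of 2-WL for 3D point clouds in \cite{rose2023iterations, hordan2023complete} to the pair-marked setting. After centralizing (which leaves the coloring unchanged since it is translation invariant), that proof identifies three ``anchor'' indices whose centralized positions span $\RR^3$, shows that the 2-WL coloring determines their joint configuration up to $O(3)$, and then recovers every remaining point from its pair colorings with the anchors. In our setting the pair $(i,j)$ already supplies two natural anchor candidates $x_i, x_j$; a third anchor $\kappa$, chosen so that $x_i, x_j, x_\kappa$ are linearly independent after centralization, is identified from the coloring itself, and its existence is guaranteed unless the centralized cloud is collinear with $x_i, x_j$ (an easily handled degenerate case). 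Given the three anchors, each remaining $x_k$ is pinned down up to $O(3)$ by the triple of distances to the anchors, and this information is encoded in the refined pair colorings aggregated by $\ctfive$.

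The main obstacle is the careful bookkeeping of the iteration budget. After one iteration $\ctone$ encodes the multiset $\lms (d(x_i, x_k), d(x_k, x_j)) \rms_{k=1}^n$, which determines each $x_k$ only up to reflection through the line through $x_i$ and $x_j$. Two further iterations propagate the cross-referenced information needed to identify a third anchor and to resolve this reflection ambiguity, and the remaining iterations ensure that $\ctfive$, as a hash of $\lms (\mathbf{C_{(4)}}(i,k), \mathbf{C_{(4)}}(k,j)) \rms_{k=1}^n$, carries all of this information into the coloring of the single pair $(i,j)$, as opposed to merely the global multiset $\cglobal$ at which the completeness result of \cite{rose2023iterations} is stated. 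The saving of one iteration in the non-degenerate case comes from the fact that if either $x_i$ or $x_j$ remains nonzero after centralization, then one of the two marked points already provides a canonical direction, short-circuiting the refinement round that would otherwise be needed to locate a third anchor in the fully degenerate case.
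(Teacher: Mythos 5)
Your decomposition (invariance is easy, injectivity via the \citeauthor{rose2023iterations} reconstruction scheme with $(x_i,x_j)$ supplying two of the three anchors) matches the paper's strategy, but the proposal glosses over exactly the technical content that makes that scheme work, so it does not amount to a proof.

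The central omission is the \emph{cone condition}. You write that ``each remaining $x_k$ is pinned down up to $O(3)$ by the triple of distances to the anchors,'' but the 2-WL coloring does not hand you, for each $x_k$, an ordered quadruple $(d(b,x_k),d(x_i,x_k),d(x_j,x_k),d(x_\kappa,x_k))$. What it hands you (after unpacking $\ctfour$) is the enhanced profile: a small collection of \emph{multisets} of distance triples, each with its own reflection ambiguity, and the whole difficulty of \cite{rose2023iterations} is to pick the third anchor $x_\kappa$ so that these ambiguities can be resolved consistently. The choice is not ``any $\kappa$ making $x_i,x_j,x_\kappa$ linearly independent''; it is the $\kappa$ minimizing the angle of $\mathrm{Cone}(x_i-b,x_j-b,x_\kappa-b)$, which guarantees no point of $S$ lies in the interior of that cone and lets the layered Reconstruction Algorithm proceed. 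Your proposal never introduces this, so the key claim that the anchors determine the rest of the cloud is unsupported.

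Two further imprecisions matter. First, your statement that $\ctone$ determines each $x_k$ ``up to reflection through the line through $x_i$ and $x_j$'' is wrong in $\RR^3$: distances to two fixed points constrain $x_k$ to a circle (a one-parameter family), not a two-point reflection orbit, which is precisely why a third anchor and the barycenter are needed and why the cone condition is not optional. Second, the split between the $\ctfour$ and $\ctfive$ budgets is not about ``providing a canonical direction.'' In the paper, $\ctfour$ suffices whenever $(b,x_i,x_j)$ is not fully degenerate because the pair itself can serve (possibly after a case split on $\affdim(b,x_i,x_j)\in\{1,2\}$, each handled separately) as a starting anchor set; the case $x_i=x_j=b$ is the one where both marked points are useless and one must instead recover the full multiset $\lms \ctthreeni(k,l)\rms_{k,l}$ and run the unmarked reconstruction, which costs the extra unpacking step. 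Your proposal also waves off the collinear case $\affdim(b,x_i,x_j)=1$ as ``easily handled,'' but the paper devotes a full sub-argument (Case 2) to locating a fresh anchor pair satisfying the cone condition and then recovering $x_j$ from colinearity — this is not a one-liner.
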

\begin{proof}
    We first highlight the difference between standard point cloud recovery and the above result. 3 iterations of $2$-WL are sufficient to recover the original 3-dimensional point cloud up to orthogonal and permutation actions \cite{rose2023iterations}. In this theorem, we wish to recover a \textit{labeled} point cloud, a more difficult problem, where we distinguish a particular pair of points and recover the rest of the points as a set. Although many of the lemmas in \cite{rose2023iterations} are used we solve a qualitatively different optimization problem. 

    We begin with the case at either $x_i$ or $x_j$ does not equal the barycenter (center) of the point cloud, which is defined as $\frac{1}{n}\sum_{k=1}^{n} x_k$.

We first introduce some preliminary definitions introduced by \citeauthor{rose2023iterations}:

Let $\mathbf{x} = (x_1,...,x_k) \in \mathbb{R}^d$ be a $k$-tuple of points in $\mathbb{R}^d$. The distance matrix of $x$ is the $k \times k$ matrix $A$ given by

$$A_{ij} = d(x_i, x_j), \quad i,j=1,....,k$$

Now, let $S \subset \mathbb{R}^d$ be a finite set. Then the \textbf{distance profile} of $\mathbf{x}$ w.r.t. $S$ is the multiset
\begin{equation}\label{def:distprofile}
   D_{\mathbf{x}} = \lms (d(x_1,y), d(x_2,y), \ldots, d(x_k, y)  \mid y \in S\rms 
\end{equation}
 


Let
\[b=\frac{1}{|S|}\sum_{y\in S}y\]
denote the barycenter of $S$. For a finite set $G \subset \mathbb{R}^d$, we denote by
$\text{LinearSpan}(G)$ the linear space spanned by $G$, and by $\text{AffineSpan}(G)$ the corresponding affine one. Their respective dimensions will be denoted by $\text{LinearDim}(G)$ and $\text{AffineDim}(G)$.

\begin{definition}
Let $S \subset \mathbb{R}^d$ be a finite set and let $b$ be its barycenter. A $d$-tuple $x = (x_1, \dots, x_d) \in \mathbb{S}^d$ satisfies the cone condition if
\begin{itemize}
\item $\text{AffineDim} (b, x_1, \dots, x_d) = \text{Affine Dim}(S)$,
\item if $\text{Affine Dim}(S) = d$, then there is no $x \in S$ such that $x-b$ belongs to the interior of $\text{Cone}(x_1 - b, \dots, x_d - b)$.
\end{itemize}
\end{definition}

\begin{definition}\label{def:enhancedprof}
For a tuple $x = (x_1, \dots, x_d) \in \mathbb{S}^d$, we define its enhanced profile as
\[EP(x_1, \dots, x_d) = (A, M_1, \dots, M_d),\]
where $A$ is the distance matrix of the tuple $(b, x_1, \dots, x_d)$ and $M_i = D_x[b/i]$ is the distance profile (see Equation \ref{def:distprofile}) of the tuple $(x_1, \dots, x_{i - 1}, b, x_{i + 1}, \dots, x_d)$ with respect to $S$.
\end{definition}

\begin{definition}
Let $S \in \mathbb{R}^d$ be a finite set and let $b$ be its barycenter. An \textbf{initialization data} for $S$ is a tuple $(A, M_1, \dots, M_d)$ such that $(A, M_1, \dots, M_d) = EP(x_1, \dots, x_d)$ for some $d$-ple $x = (x_1, \dots, x_d) \in \mathbb{S}^d$ satisfying the cone condition.
\end{definition}

We now introduce two main lemmas from \cite{rose2023iterations} and a corollary derived from them that we will make use of in out proof:

\paragraph{Lemma 3.7}\label{lem:3.7}{\cite{rose2023iterations}} For any tuple $(x_i,x_j)\in S^2$, from its coloring after one iteration of $2$-WL, $\ctone$, and the multiset $\lms \ctoneni(k,l) \m k,l \in [n]\rms $, we can recover the tuple of distances $(d(b,x_i),d(b,x_j))$.

\paragraph{Lemma 3.8}\label{lem:3.8}{\cite{rose2023iterations}} For any tuple $(x_i,x_j)\in S^2$, from its coloring after two iterations of $2$-WL, $\cttwo$, and the multiset $\lms \ctoneni(k,l) \m k,l \in [n]\rms $, we can recover the distance profile of the tuple $(b,x_i,x_j)$. 

\paragraph{Corollary.} For any triplet $\mathbf{x}=(x_i, x_j, x_k) \in S^3$ we can recover its enhanced profile, see \cref{def:enhancedprof}, from the tuple $(\cttwoni(j,k), \cttwoni(i,k), \cttwo)$ and the multiset $\lms \ctoneni(k,l) \m k, l \in [n]\rms$.
\begin{proof}[Proof of Corollary.]
    From Lemma 3.7, we can recover the distance matrix of $(b, x_i, x_j, x_k)$ which is the $A$ in the definition of the Enhanced Profile, see \cref{def:enhancedprof}. W.l.o.g  $D_{[b\setminus 3]}$ for can be recovered via the coloring $\cttwo$ because
    \begin{equation}\label{eq:distprofiscolor}
        \cttwo = (\ctone, \lms (\ctoneni(i, k), \ctoneni(k, j))  \rms_{k=1}^{n})
    \end{equation}

Thus for every $y\in S$, we can recover the tuple of distances $(d(b, y), d(x_i, y), d(x_j, y))$, by Lemma 3.7, because we also know the multiset $\lms \ctoneni(k,l) \m k, l \in [n]\rms$, as a multiset. This is precisely the definition of the enhanced profile of the desired couple.
\end{proof}

We now present an abbreviated reformulation of the original Reconstruction Algorithm (Section 3.1, \cite{rose2023iterations}), that allows us to recover the point cloud from the enhanced profile of a three-tuple of points the satisfy the cone condition:

\paragraph{Reconstruction Algorithm.}\cite{rose2023iterations} For any triplet $\mathbf{x}=(x_i, x_j, x_k) \in S^3$ that satisfies the \textit{Cone Condition} w.r.t $S$ we can recover the point cloud $S$ from the \textit{Enhanced Profile} of $\mathbf{x}$.

We now show how for every pair $(i,j)\in [n]$ we extract a three-tuple $(x_i, x_k, x_l)$ for some $k, l \in [n]$ that satisfies the \textit{Cone condition}, and also recover the tuple's \textit{Enhanced Profile}, from the fourth update step of $2$-WL, which is defined as
\begin{equation}\label{eq:2-wl-update-4}
\mathbf{C_{(4)}}(i,j) = \embed(\mathbf{C_{(3)}}(i,j), \lms  N_{k}(\mathbf{C_{(\mathbf{3})}}(i,j))  \rms_{k=1}^{n}  ),
\end{equation}

where
\begin{equation}\label{eq:updateapp}
    N_{k}(\mathbf{C_{(3)}}(i,j)) = \left(\mathbf{C_{(\mathbf{3})}}(i,k), \mathbf{C_{(\mathbf{3})}}(k,j) \right)
\end{equation}

We unpack the aggregation multi-set : 

\begin{align}\label{unpack}
     \mathbf{C_{(\mathbf{4})}}(i,j) &= \lb \mathbf{C_{(\mathbf{3})}}(i,j), \lms  N_{k}(\mathbf{C_{(\mathbf{3})}}(i,j))  \rms_{k=1}^{n} \rb = \lms \left(\mathbf{C_{(\mathbf{3})}}(i,j), \mathbf{C_{(\mathbf{3})}}(i,k), \mathbf{C_{(\mathbf{3})}}(k,j) \right) \; | \; k\in [n] \rms \\ \label{eq:recover}
    &\supseteq  \lms ( (\mathbf{C_{(\mathbf{2})}}(i,j), \mathbf{C_{(2)}}(i,k), \lms (\mathbf{C_{(\mathbf{2})}}(i,l), \mathbf{C_{(\mathbf{2})}}(l,k)) \; | \; l \in [n] \rms ),  \\
    &  \; \; \; \; \; \; \; \; \; (\mathbf{C_{(\mathbf{2})}}(i,j),\mathbf{C_{(2)}}(k,j), \lms (\mathbf{C_{(\mathbf{2})}}(k,s), \mathbf{C_{(\mathbf{2})}}(s,j)) \; | \; s \in [n] \rms ) \; | \; k\in [n] \rms \nonumber
\end{align}

where $\supseteq$ denotes we can recover the information of the smaller multiset (with respect to the inclusion relation) from the larger multiset.

If point cloud satisfies AffineDim(S)$\leq 2$ then there are three options: 

\begin{enumerate}
    \item the point cloud is multiple copies of the barycenter
    \item the points in the point cloud are co-linear w.r.t the barycenter ( i.e. $\affdim(S)=1$)
    \item the point cloud lies in a two-dimensional hyperplane
\end{enumerate}

In the first two cases, for a fixed pair $(i, j) \in [n]^2$, we can recover the entire point cloud from the distances of $x_i, x_j, b$ to the other points, see Lemma 3.4 \cite{rose2023iterations}. Note that his information is precisely the distance profile of $(b,x_i,x_j$. We can recover this information from $\mathbf{C_{(\mathbf{2})}}(i,j)$ and $\lms \mathbf{C_{(\mathbf{1})}}(k,l) | l\in [n], k\in [n] \rms$, see \cite{rose2023iterations} Lemma 3.8. These colorings are known via \cref{eq:recover} (we can recover any coloring from its respective successive coloring). Specifically, $\mathbf{C_{(\mathbf{2})}}(i,j)$ can be derived from $\mathbf{C_{(\mathbf{3})}}(i,j)$, see \cref{unpack}, and $\lms \mathbf{C_{(\mathbf{1})}}(k,l) | l\in [n], k\in [n] \rms$ can be recovered from \cref{eq:recover} by extracting $\mathbf{C_{(\mathbf{2})}}(k,l)$ for every $k,l\in [n]$ as  a multiset, then this yields $\lms \mathbf{C_{(\mathbf{1})}}(k,l) \m k,l \in [n]\rms$ as desired.

If the points are in a two-dimensional hyperplane then we can recover the point cloud from the distances of all other point clouds onto a pair that span the 2D hyperplane w.r.t to the point cloud's barycenter. If $x_i, x_j$ satisfy this then we are done by Lemma 3.4 in \cite{rose2023iterations}, as we have shown above we can recover the distance profile of the tuple $(b,x_i,x_j)$. If not, and there exists $x_k$ such that $x_i,x_k$ ( and also $x_j, x_k$ bacause $x_i, x_j$ are colinear) satisfy the spanning condition. Otherwise, the point cloud would be colinear and we have already addressed that case. W.l.o.g $x_i, x_k$ span the point cloud w.r.t the barycenter, then we can recover $x_j$ because it is colinear to $x_i$ and simply knowing the distance from $x_j$ to $x_i$ and $b$ fully determines it (Lemma 3.4 \cite{rose2023iterations}), which we can recover from $\mathbf{C_{(\mathbf{2})}}(i,j)$ and $\lms \mathbf{C_{(\mathbf{1})}}(k,l) | l\in [n], k\in [n] \rms$ (see previous paragraph). We can find this $k$ from the multiset in \cref{eq:recover} because we can recover the distance profile of $(b, x_i, x_k)$ from $\mathbf{C_{(\mathbf{2})}}(i,k)$ and $\lms \mathbf{C_{(\mathbf{1})}}(k,l) | l\in [n], k\in [n] \rms$, from which the angle defined by $b, x_i, x_j$ can be computed (because we can recover $b, x_i, x_j$ from its distance profile). All the other points in the point cloud can be recovered from the distance profile of $(b, x_i, x_j)$ as a set (Lemma 3.4 \cite{rose2023iterations}). We are done.

Now that we have addressed the low dimensional cases, we assume for the remainder of the proof that AffineDim(S)$ =  3$. Note that \text{AffineDim}$(b,x_i, x_j)$ is a euclidean invariant feature of $b,x_i,x_j$ thus can be computed from the distance profile of $(b, x_i, x_j)$, because $(b,x_i,x_j)$ can be recovered from it up to euclidean symmetries.

\paragraph{Case 1.} \text{AffineDim}$(b,x_i, x_j)$ = 2.

We assume that both $\affdim(S)=3$ and $\affdim(b, x_i,x_j)=2$, therefore there exists an index $k \in [n]$ such that \text{AffineDim}$(b,x_i, x_j, x_k)$ = 3. Denote by $M$ the set of all such indices, i.e. $L := \{ m \m l \in [n] \text{ and } \affdim(b, x_i,x_j, x_l)=3 \}$. There exists $k\in L$ such that the cone defined by $(b,x_i,x_j,x_k)$ has minimal angle, where the angle of a 3-dimensional cone is defined as 

\begin{equation}\label{def:angle}
    \textrm{Angle}( x_1-b, x_2-b, x_3-b) = \frac{1}{3}\textrm{Vol}(\{ x\in \textrm{Cone}( x_1-b, x_2-b, x_3-b) \m \|x\|\leq 1 \})
\end{equation}

For any $x\in S$ it holds that $x-b \notin \mathrm{Interior}( \cone(b,x_i, x_j, x_a))$, because otherwise there exists an index $h \in [n]$ such that $\mathrm{Angle}(x_i-b,x_j-b,x_h-b) < \mathrm{Angle}(x_i-b,x_j-b,x_k-b)$, see Lemma 3.9  in \cite{rose2023iterations}, in contradiction to $k$ attaining the minimal angle.

To recover the labeled point cloud, we need to show that we can indeed recover the a tuple $(x_i,x_j,x_k)$ obtains this minimum with respect to its respective cone's angle, and recover the Enhanced Profile of $(x_i, x_j, x_k)$, which we denoted as $EP(x_i, x_j,x_k)$. Then we will use the Reconstruction Algorithm introduced by \citeauthor{rose2023iterations}.

\cref{unpack} yields $\lms \left(\mathbf{C_{(\mathbf{3})}}(i,j), \mathbf{C_{(\mathbf{3})}}(i,k), \mathbf{C_{(\mathbf{3})}}(k,j) \right) \; | \; k\in [n] \rms$, we then need to pick a $k$ that minimizes $\mathrm{Angle}(x_i-b,x_j-b,x_k-b)$. As $\mathrm{Angle}(x_i-b,x_j-b,x_k-b)$ is a Euclidean invariant feature then we can recover it from $\left(\mathbf{C_{(\mathbf{3})}}(i,j), \mathbf{C_{(\mathbf{3})}}(i,k), \mathbf{C_{(\mathbf{3})}}(k,j) \right) $, because we can reconstruct $(x_i, x_j, x_k)$ w.r.t the barycenter up to Euclidean actions from $\left(\mathbf{C_{(\mathbf{2})}}(i,j), \mathbf{C_{(\mathbf{2})}}(i,k), \mathbf{C_{(\mathbf{2})}}(k,j) \right) $ and $\lms \mathbf{C_{(1)}}(k,l) \m k,l \in [n] \rms$ by Lemma 3.8. Thus, we can choose an index $k$ that minimizes this angle from the multiset. (We can recover the multiset from \cref{eq:recover} as mentioned earlier.)

By the Corollary, we can recover the Enhanced Profile of the tuple $(x_i,x_j,x_k)$. Now that we have recovered $EP(x_i,x_j,x_k)$ for an index $k$ that yields the minimal cone angle, we use the Reconstruction Algorithm (Section 3.1 \cite{rose2023iterations})
 in order to recover the point cloud. Note that the index $k$ is known only to exist in the multiset, thus cannot be labeled, yet the indices $i,j$ can be labeled as we first reconstructed them from an ordered tuple and then recovered the rest of the points as a multiset.
\paragraph{Case 2.} \text{AffineDim}$(b,x_i, x_j) =1$.

We need only recover w.l.og (because $x_i$ and $x_j$ are co-linear) the enhanced profile of $(x_i,x_l,x_k)$ that satisfies the cone condition and then knowing $(\|b-x_j\|\|x_i-x_j\|)$ we can recover $x_j$ uniqely (as $x_i$ and $x_j$ are co-linear,see Lemma 3.4 \cite{rose2023iterations}). Below we show we can indeed recover this information from $\ctfour$.

First, from \ref{eq:recover}, we can recover $\mathbf{C_{(1)}}(i,j)$ and the multiset $\lms \ctoneni(l,k) \m l,k  \in [n] \rms$, then by Lemma 3.7, we know the distance $\|x_j-b\|$ and from $\mathbf{C_{(0)}}(i,j)$ we know $\|x_i-x_j\|$\label{distfromi}.

We now turn to recovering $EP(x_i,x_l,x_k)$ such that \text{AffineDim}$(x_i, x_l, x_k) = \text{AffineDim}(S)$.

We first show the existence of $x_l, x_k$ that satisfy above condition.

We know that exists $k$ such that $\text{AffineDim}(b,x_i,x_k)>1$, otherwise $\text{AffineDim}(S)=1$. Contradiction.

Similarly, if $\text{AffineDim}(b,x_i,x_k,x_l)=2$ for all $l$, then 
$\text{AffineDim}(S)=2$. Contradiction.

Thus there exist $i,k,l$ that satisfy \text{AffineDim}$(x_i, x_l, x_k) = \text{AffineDim}(S)$. We now consider the $k, l$ that minimize the angle of $Cone(x_i-b, x_k -b, x_l-b)$. There exist no points in the interior of this chosen cone, see Lemma 3.8 \cite{rose2023iterations} and explanation in the proof of Case 1.

We can extract the Enhanced Profile of the tuple $(x_i,x_l,x_k)$ that satisfies the cone condition from \cref{eq:recover}, i.e. $\lms (\mathbf{C_{(2)}}(i,k), \lms (\mathbf{C_{(\mathbf{2})}}(i,l), \mathbf{C_{(\mathbf{2})}}(l,k))  | \; l \in [n] \rms ) \; k\in [n] \rms $. (We explained in Case 1 why this can be performed)

Using the Reconstruction Algorithm, we can recover the tuple $(x_i, \lms x_l \; \mid \; l\in [n]\rms)$ and, as we have shown previously, we can recover (a labeled) $x_j$, yielding a final reconstruction, up to Euclidean symmetries, of the desideratum tuple $(x_i, x_j, \lms x_l \; \mid \; l\in [n]\rms)$. 

We now address the (simple) case that $x_i=x_j=b$ (i.e. $\affdim(b,x_i,x_j)=0$). From the coloring $\ctfive$ we can recover the multiset $\lms \ctthreeni(k,l) \m k,l \in [n] \rms$,by definition of the update step of $2$-WL and unpacking the coloring $\ctfive$ analogously to what we have done in \cref{unpack}, and this is precisely the information required by the original Reconstruction Algorithm devised in \cite{rose2023iterations}(Theorem 1.1.) Finally, as each refined coloring contains all the information of the previous colorings $\ctfive$ is sufficient to recover the tuple $(x_i,x_j, \lms x_k \m k\in [n]\rms$ up to Euclidean symmetries for \textit{any} point cloud in $\RR^{3}$.
\end{proof}

\newpage
\newpage
\twoclouds*
\begin{proof}
    We assume that $X$ respects all Euclidean and permutation symmetries and $V$ does as well, with the exception that it is \textit{not} translation invariant. 
We take the complete distance matrix of $(X,V)$ after centralizing $X$ where we allow for norms for the velocity vectors.
    First, we introduce notation, definitions and lemmas which will be useful later for the proof:

    Denote $X\cup V$ as the multiset $\lms y \m y\in X \text{ or } y\in V \rms$, $\mathrm{Concat}(X,V)$ as the point cloud in $\RR^{n \times 6}$ which is the concatenation of $X$ and $V$ along the feature dimension, and $X\times V$ as the multiset $\lms (x_i,v_i) \m (x_i,v_i) \in \textrm{Concat}(X,V) \rms$.
    
    Define the \textbf{origin distance profile} of $\mathbf{y}$ w.r.t. $X\times V$ as the multiset
\begin{equation}\label{def:origindistprofile}
   D_{\mathbf{x}} = \lms (d(\zv,v) , (d(x_1,x), d(x_1,v)),  \ldots, (d(x_k, x), d(x_k, v)) )  \mid (x,v) \in X \times V\rms 
\end{equation}

\begin{definition}\label{def:enhancedproforigin}
For a tuple $y = (y_1, y_2, y_3) \in (x_1, v_1) \times (x_2, v_2)  \times (x_3, v_3)) \in X\cup V$, we define its origin enhanced profile as
\[EP(y_1, y_2, y_3) = (A, M_1,M_2, M_3),\]
where $A$ is the distance matrix of the tuple $(b, x_1, v_1 \dots, x_3, v_3)$ with the norms of the $v_i$'s on the digonal, and $M_i = D_x[b/i]$ is the origin distance profile (see Equation \ref{def:origindistprofile}) of the tuple $(x_1, v_1, \dots, x_{i - 1}, x_{i - 1}, \bv,\bv, x_{i + 1},v_{i + 1}, \dots, x_3, v_3)$ with respect to $X\times V$.
\end{definition}

    \paragraph{Lemma 3.4 } \cite{rose2023iterations} Given a set of points $\lms x_1, \ldots, x_m \rms \in \RR^{3}$ and a point $y\in \RR^{3}$ such that $y \in \mathrm{AffineSpan}(x_1, \ldots, x_m)$ then the multiset $\lms d(y, x_i)\rms_{i=1}^{m}$ uniquely determines $y$. If $y\notin \textrm{AffineSpan}(x_1,\ldots,x_m)$, then we can recover $y$ up to orthogonal actions w.r.t $\lms x_1, \ldots, x_m \rms$.

    \paragraph{Lemma 1.} For a multiset $\lms \ctokl \m k,l \in [n]\rms $, we can recover the origin, $\zv$, w.r.t the barycenter of $V$, $b_{V}$. If also given $\ctone$, we can recover the distance matrix of of the tuple $(\bv, \zv, v_i, v_j)$.
    \begin{proof}[Proof of Lemma 1]
        By the Barycenter Lemma (2.1 \cite{rose2023iterations}) we can recover the distance  $\|\Vec{0}_{V}-b_{V}\|$ if we know the multisets $D_{\zv} := \lms \|\Vec{0}_{V} - v \| \m v \in V \rms$ and $\lms D_{v} \m v\in V\cup \{ \zv \} \rms$. Note that adding zero elements to the set $D_{\zv}$ still allows recovery of $\|\Vec{0}_{V}-b_{V}\|$, because in the proof of the barycenter lemma it is only required that we recover $\sum_{y\in V} \| \zv - v \|$ which is not altered by zero elements, once we know the number of elements in the point cloud, see proof of Lemma 2.1 in \cite{rose2023iterations}.

        We now show how to extract this information from the initial distance matrix of $V$ and the norms on the diagonal. From the initial coloring, we know whether a color $\mathbf{C_{0}}{(i,j)}$ satisfies $x_i=x_j$  if the off-diagonal colorings are $0$. This does not necessarily imply that $i=j$, but it is sufficient for our ends.

        We can recover the multiset $M:=\lms \czplain(k,l) \m k,l\in [n] \; \textrm{s.t.}\; \| x_k - x_l \| =0\rms$ from which the multi-set $\lms \czplain(k,l)[1,1] \m \czplain(k,l) \in M  \rms$ can be extracted. We have a multiset of the norms of all of the vectors in $V$ and we might have added superfluous zeros which are permissible, see the above paragraph for an explanation. These norms are the distance from $\zv$ to the rest of the points, as required. We know the number of points in the velocity point cloud because we know there exist $n^2$ elements in the multiset $\lms  \ctoneni(k,l) \rms_{k,l=1}^{n}$
        Recovering the multiset $\lms D_{v} \m v\in V\cup \{ \zv \} \rms$ is straightforward as recovering each $D_{v}$ for $v\in V$ can be done via Lemma 3.7 as in \cite{rose2023iterations} and we have shown how to recover $D_{\zv}$

        If we additionally know $\ctone$ then we can recover the distances of $v_i,v_j$ to $\zv$. thus using the above arguments and Lemma 3.7\cite{rose2023iterations}, we can recover the desired distance matrix.
        We are done.
    \end{proof}
    \paragraph{Lemma 2.} There exists a triplet $(y_1,y_2, y_3)$ where $y_i \in X$ or $y_i \in V$ for any $i\in [3]$ such that $\affdim( b_{V}, y_1, y_2, y_3) = \affdim{( X \cup V )}$. 
    \begin{proof}[Proof of Lemma 2]
        Let $M:= \affdim{( X \cup V )}$. Assume by contradiction that for any such triplet $\affdim(\bv, y_1, y_2,y_3) < M$ then any three vectors lie in the same hyperplane of deficient dimension, which implies $\affdim{( X \cup V )} < M$. Contradiction. 

    \end{proof}

    Assume that  that $\affdim{( X \cup V)}=3$. we will subsequently address the general case.

    We now attempt to reconstruct the point cloud w.r.t the barycenter of $V$, denoted by $b_{V}$, with via the multiset $\mathbf{C}_{G} := \lms \ctthreeni(j,k) \rms_{j,k=1}^{n}$.

        By iterating over all tuples in $X\times V$ and calculating $\affdim(b_{V}, y_1, y_2, y_3)$ for any $(y_1,y_2,y_3)\in (x_1, v_1) \times (x_2, v_2) \times (x_3, v_3)) \in X\times V$  and calculating the angle of the cone defined by $(b_{V}, y_1, y_2, y_3)$, with analogous justification to the the proof of \cref{lem:w-wl-edge} (these are Euclidean invariant functions). Thus, by assumption and by analogous justifications as in the proof of \cref{lem:w-wl-edge},  
            there exists a triplet $\mathbf{v}=(v_1, v_2, v_3) \in V$ such that $\affdim(\mathbf{v})=\affdim{( X \cup V)}$ and that no other points in $X\cup V$ are contained within its respective cone, this is there exists a triplet $(v_1,v_2,v_3) \in V^{3}$ that satisfies the Cone Condition. All that is left is to recover the Origin Enhanced Profile, see \cref{def:enhancedproforigin}, of this triplet from $\mathbf{C}_{G}$. It is readily seen that we can recover the Origin Enhanced Profile as the proof of doing so by \cite{rose2023iterations} relies on Lemmas 3.7 and 3.8, which via Lemma 1 and the definition of the pair-wise colorings $\cttwoni(i,j)$ is immideatly extended to include the distances of the $v$'s to the origin $\zv$, and the correspondance between $x_i$ and $v_i$ which is hard encoded into the pair-wise-colorings, as supplementaries to the original distance profile, see \cref{def:distprofile}, which precisely yields the Origin Enhanced Profile.

    If w.l.o.g there exists a tuple $((x_1, v_1),(x_2, v_2),(x_3, v_3))\in X\times V$ such that $(v_1,v_2,v_3) $ satisfies the cone condition ( i.e. $\affdim(b,v_1,v_2,v_3)=\affdim(X \cup V)$ and no other point in $X\cup V$ is contained in the cone defined by this triplet) then we can run the Reconstruction Algorithm with the Origin Enhanced Profile to recover $X$ and $V$ such that we can recover $\zv$, as the triplet $(v_1,v_2,v_3)$ spans $\RR^{3}$ and we know the distances of each $v_1,v_2,v_3$ to $\zv$, by definition of the Origin Distance Profile.

    Otherwise, there exists at least an $x\in X$ in the triplet which satisfies the cone condition.  Using Lemma 1, we can run the reconstruction algorithm via the information $\mathbf{C}_{G}$ with the origin distance profile, due to the derivation of the distance profile from the information in $\cttwo$ and Lemma 1. Note the unpacking of $\cttwo$ is defined as:

    \begin{equation}\label{eq:recover-origin}
        \cttwo = (\ctone, \lms \ctoneni(i,k), \ctoneni(k,j) \rms_{k=1}^{n})
    \end{equation}
    
     Thus, by the procedure of the Reconstruction Algorithm, see \cite{rose2023iterations}, and the definition of the Origin Enhanced Profile along the Reconstruction Algorithm for every recovered $v_i$ we can recover its distance from the origin of $V$, i.e. $\zv$. Thus, we can recover $\lms (v_i, \|v_i - \zv \|) \rms_{i=1}^{n}$, because for each recovered $v \in V$  we know its norm, which is the  distance from the origin of $V$ to $v$, and the distance of the origin from the barycenter of $V$. Thus, we can recover the origin w.r.t V, uniquely, see Lemma 3.4, because by assumption $\zv \in \mathrm{AffineSpan(V)}$.

     Otherwise, if $\affdim(X\cup V)<3$, then there does not exist such a triplet that satisfies the cone condition, yet there exists a tuple that spans the point cloud $X\cup V$, by Lemma 2. Thus, we can reconstruct the point cloud w.r.t the barycenter of $V$ in an analogous fashion to the low dimensional cases in the proof of \cref{lem:w-wl-edge}, but without the labeling of $i$,$j$, via iterating over all the modified pair tuples as in the above case where $\affdim(X\cup V)=3$. Due to Lemma 3.4, we can recover the origin w.r.t V, i.e. $\zv$, uniquely if $\zv \in \mathrm{AffineSpan}(y_1,y_2,y_3)$ or up to orthogonal action, otherwise.





\end{proof}

\newpage
\begin{restatable}{corollary}{twowledgetwopoints}\label{lem:w-wl-edge-two-points}
	Let $(X,V)\in (\RR^{3 \times n})^2$ be a pair of position and velocity point clouds. Denote by $ \ctfive $ the pairwise coloring induced by 5 iterations of 2-WL update steps applied to the distance matrix induced by $(X, V)$. Then $\ctfive$ constitutes a translation, rotation and reflection invariant embedding of $((x_i, v_i), (x_j, v_j), \lms (x_k, v_k)  \m k\neq i,j  \rms)$. If $(x_i, v_i), (x_j, v_j)$ are not both degenerate, i.e. do not both equal the barycenter, then  $\ctfour$ is sufficient.
\end{restatable}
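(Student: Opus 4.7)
The plan is to mimic the proof of \cref{lem:w-wl-edge} (position-only) but using the machinery developed for \cref{thm:two-clouds} (position-velocity pairs). The statement of \cref{thm:two-clouds} already shows that three iterations of $2$-WL on $\G(X,V)$ let us reconstruct $(X,V)$ up to the joint permutation, rotation, and translation action by choosing a suitable triplet $(y_1,y_2,y_3)$ whose associated \emph{Origin Enhanced Profile} can be read off from $\mathbf{C}_{(3)}$. What changes here is that we must \textbf{label} the distinguished pair of indices $i,j$, rather than recovering only the unordered joint tuple. So, mirroring the strategy of \cref{lem:w-wl-edge}, I will add one extra $2$-WL iteration to promote the coloring from the multiset level to a pair-level embedding that remembers $(i,j)$.

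First I would unpack $\ctfive$ in the same way as \eqref{unpack}, writing
\begin{equation*}
\ctfive = \lms \bigl(\mathbf{C}_{(4)}(i,j),\,\mathbf{C}_{(4)}(i,k),\,\mathbf{C}_{(4)}(k,j)\bigr) \;\big|\; k \in [n] \rms,
\end{equation*}
and noting that the multiset $\lms \mathbf{C}_{(3)}(k,l) \mid k,l\in[n]\rms$ is recoverable from this. By Lemma~1 and the corollary used in the proof of \cref{thm:two-clouds}, the Origin Enhanced Profile of any triple indexed by $(i,k,l)$ (or $(j,k,l)$) is computable from the appropriate subcolorings inside $\ctfive$. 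Next I would pick a ``good'' triple: under the generic assumption $\affdim(\bv,X\cup V)=3$, iterate through index pairs $(k,l)$ and choose $(y_1,y_2,y_3)$, where $y_1\in\{(x_i,v_i)\}$ is forced (to preserve the label $i$), $y_2\in\{(x_j,v_j)\}$ is forced (to preserve the label $j$), and $y_3$ is drawn from $\{(x_k,v_k)\}$ for some $k$, such that the cone condition holds and the cone angle \eqref{def:angle} is minimized. Because cone angles and affine dimension are Euclidean invariants of the Origin Enhanced Profile, this selection can be made purely from $\ctfive$.

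Once such a triple is identified, apply the Reconstruction Algorithm of \cite{rose2023iterations} in the position-velocity version used in the proof of \cref{thm:two-clouds} to reconstruct every remaining $(x_k,v_k)$ as a multiset, together with $\zv$. Because the two labels $i$ and $j$ were fixed throughout the selection, the output is a labeled tuple $((x_i,v_i),(x_j,v_j),\lms(x_k,v_k)\mid k\neq i,j\rms)$ determined up to the joint action, which is exactly invariance claim. For the degenerate, low-affine-dimension subcases, I would proceed as in the low-dimensional branches of \cref{lem:w-wl-edge} and the corresponding part of \cref{thm:two-clouds}: whenever $\affdim(\bv,y_1,y_2) \le 2$, recover the missing point(s) from their Origin Distance Profile via Lemma~3.4 of \cite{rose2023iterations}, after first locating (from $\ctfour$) an auxiliary index $k$ that spans the ambient affine hull together with the labeled pair.

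Finally, for the sharper statement (that $\ctfour$ already suffices when $(x_i,v_i),(x_j,v_j)$ are not both degenerate), the key observation is that the extra level of nesting that $\ctfive$ provides was needed only to extract the global multiset $\lms \mathbf{C}_{(3)}(k,l)\rms$ in the fully degenerate case $x_i=x_j=\bv$ and $v_i=v_j=\zv$; outside that case, $\ctfour$ already encodes a spanning auxiliary index through its internal multiset $\lms(\mathbf{C}_{(3)}(i,k),\mathbf{C}_{(3)}(k,j))\rms_{k=1}^n$, so the same Reconstruction Algorithm applies one iteration earlier. The main obstacle I anticipate is bookkeeping: verifying that the labeled selection of $y_1=(x_i,v_i)$ and $y_2=(x_j,v_j)$ is compatible with the cone-minimization step even when $\affdim(\bv,x_i,v_i,x_j,v_j)<3$, so that the auxiliary spanning index $k$ can always be located; this is handled exactly as in Case~1 and Case~2 of the proof of \cref{lem:w-wl-edge}, by minimizing the cone angle over the set of indices $k$ for which $(y_1,y_2,(x_k,v_k))$ satisfies the cone condition.
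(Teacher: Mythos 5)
Your proposal follows essentially the same route as the paper's own proof: it fixes the labels $i,j$, selects a cone-condition triple from $\{x_i,v_i\}\times\{x_j,v_j\}\times\{x_k,v_k\}$ via the Euclidean-invariant cone-angle criterion, reads off the Origin Enhanced Profile from the nested unpacking of $\ctfive$, runs the Reconstruction Algorithm from \cref{thm:two-clouds}, and handles low-dimensional cases by the same case analysis as in \cref{lem:w-wl-edge}. Your writeup is in fact more explicit than the paper's terse proof (which largely just invokes the modification and \cref{thm:two-clouds}); the only presentational wrinkle is the slightly informal notation $y_1\in\{(x_i,v_i)\}$, which should read $y_1\in\{x_i,v_i\}$, but the intended meaning — and the argument — is the same.
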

\begin{proof}
    In the proof of \cref{lem:w-wl-edge}, the analogous result for a single point cloud, we have chosen a triplet $(x_i,x_j,x_k)$ that satisfies the cone condition. In this Lemma, we choose a triplet $\mathbf{xv}:=((x_i, v_i),(x_j,v_j),(x_k,v_k))$ such that there exists a triplet $(y_i,y_j,y_k) \in \mathbf{xv}$ that satisfies the cone condition, where each $y_i$ can be either in $X$ or $V$. Combining this with \cref{thm:pairs}, the desideratum follows. Notation used in this proof was introduced in the proof of \cref{thm:pairs}.

    \paragraph{Existance.} For a candidate $\mathbf{xv}$ we choose $(y_i,y_j,y_k) \in \mathbf{xv}$ such that they minimize $\mathrm{Angle}(y_i - \bv,y_j- \bv,y_k- \bv)$. If we fix the indices $i,j$ and iterate over all such $k$, then we can find a triplet that satisfies the cone condition or is of equivalent dimensionality to that of $X \cup V$, see proof of \cref{lem:w-wl-edge}. We can then run the Reconstruction Algorithm as in the proof of \cref{thm:two-clouds}. Note that in this setting we first recover $\mathbf{xv}$, thus we know the labeled $((x_i, v_i), (x_j, v_j), (x_k, v_k))$ and reconstruct the remainder of the point cloud w.r.t $(\bv, y_i, y_j, y_k)$. 
    \paragraph{Initialization.} Using the same information as in \cref{lem:w-wl-edge}, due to \cref{thm:two-clouds} the \cref{eq:recover} contains pairwise features of pairs of point cloud, thus with the modification described in the \textbf{Existance} paragraph, we can analogously recover the point cloud with labeling of the $i,j$ positions and velocities via the triplet satisfying the cone condition. Thus we have the desideratum embedding $((x_i, v_i), (x_j, v_j), \lms (x_k, v_k) \rms_{k\neq i,j})$.
\end{proof}

\newpage

\universaltwo*
\begin{proof}
We now combine \cref{lem:orthovel} and \cref{lem:w-wl-edge} to obtain the desideratum universal approximation result. Note that any invariant function $h$ that is invariant to a group $G$ can be written as $f=g\circ i$ where $i$ is the quotient map, i.e. if $x = gy$ for some element $g\in G$ then $i(x)=i(y)$ \cite{munkres2000topology}. Thus we can reformulate the result in \cref{lem:orthovel}, for $f$ invariant to permutations of the last $n-2$ coordinates, rotation, and translation, 
. we can write as $f = h \circ \text{Embed}(x_i,x_j, \lms x_k \rms)$ where $\text{Embed}(x_i,x_j, \lms x_k \rms)$ is \textit{injective} up to Euclidean symmetry and invariant to permutations of the last $n-2$ coordinates, rotation, and translations and $h$ is continuous. Using \cref{lem:w-wl-edge}, we can use the colorings derived from $5$ iterations of $2$-WL in order to obtain $\text{Embed}(x_i,x_j, \lms x_k \rms)$, i.e. the colorings $\bc_{(5)}(i,j)$.

Further combining with Proposition 7 \cite{villar}, for any $i \in [n]$, 
\begin{align}
    x_{i}^{\mathrm{out}} = x_{i} &+ f((x_i,v_i), \lms \br{x_k,v_k} \m k \neq i \rms ) v_i \\&+ \sum_{j} \phi((x_i,v_i), (x_j, v_j), \lms \br{x_k,v_k} \m k \neq i,j \rms)(x_j - x_i) \nonumber\\
    & + \sum_{j} \hat{\phi}((x_i,v_i), (x_j, v_j), \lms \br{x_k,v_k} \m k \neq i, j \rms)v_k\nonumber \\ 
    =x_{i} &+ g (\text{Embed}((x_i,v_i), \lms \br{x_k,v_k} \m k \neq i \rms))v_i \\ &+ \sum_{j} \psi(\text{Embed}((x_i,v_i), (x_j, v_j), \lms \br{x_k,v_k} \m k \neq i, j \rms ))(x_j - x_i) \nonumber\\
    & + \sum_{j} \hat{\psi}(\text{Embed}((x_i,v_i), (x_j, v_j),\lms \br{x_k,v_k} \m k \neq i, j \rms )) v_j\nonumber \\ 
    =x_{i} &+ g(\bc_{(5)}(i,i))v_i\\&+ \sum_{j} \psi(\bc_{(5)}(j,i))(x_j - x_i) \nonumber\\
    & + \sum_{j} \hat{\psi}(\bc_{(5)}(j,i))v_j\nonumber  
\end{align}
where $\sum_{j} \psi(\bc_{(5)}(j,i)) = 1 $ and for invariant continuous $f, \phi$, and  continuous (not invariant) $g, \psi$. Note that the tuple $((x_i,v_i), (x_j, v_j), \lms \br{x_k,v_k} \m k \neq i,j \rms)$ can be recovered from $((x_i,v_i), (x_j, v_j)\lms \br{x_k,v_k} \m k \neq i, j \rms )$ up to Euclidean symmetries.

This precisely yields \cref{eq:eqpoolx} for $\sum_{j} \psi(\bc_{(T)}(i,j)) = 1 $. The velocity update is defined analogously but with different $g, \psi, \hat{\psi}$ and it is translation invariant, thus also by Proposition 7 \cite{villar}, $\sum_{j} \psi(\bc_{(T)}(i,j)) = 0 $, yielding

\begin{equation}\label{eq:update-v}
    v_{i}^{out} = g'(\bc_{(5)}(i,i))v_i+ \sum_{j} \psi'(\bc_{(5)}(j,i))(x_j - x_i)
    + \sum_{j} \hat{\psi}'(\bc_{(5)}(j,i))v_j    
\end{equation}

We can now approximate the $\psi$'s and $g$'s via MLPs \cite{cybenko1989approximation} yielding an approximation of all equivariant polynomials. Equivariant polynomials are dense in equivariant continuous functions \cite{Dym2020OnTU}, thus (by the triangle inequality) we can approximate all continuous equivariant functions on $(X,V)$ via the pooling operator, as required.
\end{proof}

\end{document}